\documentclass[lettersize,journal]{IEEEtran}

\usepackage[utf8]{inputenc}
\usepackage[T1]{fontenc}
\usepackage[compatibility=false]{caption}
\usepackage{hyperref}
\usepackage{url}
\usepackage{booktabs}
\usepackage{amsfonts}
\usepackage{nicefrac}
\usepackage{microtype}
\usepackage[table]{xcolor}
\usepackage{tabularx}
\usepackage{colortbl}
\usepackage{ulem}
\usepackage{amsthm}
\usepackage{amsmath}
\usepackage{array}
\usepackage[noend]{algcompatible}
\usepackage{algorithm}
\usepackage{graphicx}
\usepackage{multirow}
\usepackage{subcaption}
\usepackage{wrapfig}
\usepackage{enumitem}
\usepackage{makecell}
\usepackage{pifont}
\algrenewcommand{\algorithmiccomment}[1]{\quad// #1}

\newtheorem{theorem}{Theorem}

\newtheorem{proposition}{Proposition}

\newtheorem{remark}{Remark}

\PassOptionsToPackage{numbers}{natbib}

\definecolor{myblue2}{RGB}{68,114,196}
\definecolor{myblue}{RGB}{0,80,157}
\definecolor{citecolor}{RGB}{34, 149, 34}
\definecolor{pruningblue}{RGB}{36,143,255}

\definecolor{mygreen}{RGB}{0,100,0}

\DeclareMathOperator*{\argmin}{arg\,min}

\newcommand{\etal}[0]{\textit{et al.}}

\newcommand{\std}[2]{\begin{tabular}{@{}c@{}}#1{\color{gray}$\scriptscriptstyle  \pm  #2$}\end{tabular}}
\newcommand{\stdb}[2]{\begin{tabular}{@{}c@{}}\textbf{#1}{\color{gray}$\scriptscriptstyle  \pm  #2$}\end{tabular}}

\newcommand{\up}[2]{\begin{tabular}{@{}c@{}}#1{\color{mygreen}$\scriptscriptstyle  \uparrow  #2$}\end{tabular}}
\newcommand{\down}[2]{\begin{tabular}{@{}c@{}}#1{\color{myblue}$\scriptscriptstyle  \downarrow  #2$}\end{tabular}}

\definecolor{lightblue}{RGB}{216, 239, 248}
\definecolor{lightgreen}{RGB}{229, 245, 222}
\definecolor{darkred}{RGB}{186, 49, 50}
\definecolor{darkorange}{rgb}{1.0, 0.549, 0.0}
\definecolor{darkbrown}{rgb}{0.396, 0.263, 0.129}

\newcommand{\lpld}[0]{\cellcolor{lightblue!90}}
\newcommand{\lpqld}[0]{\cellcolor{lightgreen}}

\newcommand{\appendixnarrowtablewidth}{0.72\textwidth}
\newcommand{\appendixtablewidth}{0.80\textwidth}
\newcommand{\appendixwidetablewidth}{0.88\textwidth}

\begin{document}

\title{Soft Label Pruning and Quantization for Large-Scale Dataset Distillation}

\author{Lingao Xiao, Yang He
\IEEEcompsocitemizethanks{
\IEEEcompsocthanksitem Received 11 June 2025; revised 18 January 2026; accepted 8 February 2026. This research is supported by A*STAR Career Development Fund (CDF) under Grant C243512011, the National Research Foundation, Singapore under its National Large Language Models Funding Initiative (AISG Award No: AISG-NMLP-2024-003). Any opinions, findings and conclusions or recommendations expressed in this material are those of the author(s) and do not reflect the views of National Research Foundation, Singapore. Recommended for acceptance by J. Sun. (\textit{Corresponding author: Yang He.})
\IEEEcompsocthanksitem The authors are with the CFAR, Agency for Science, Technology and Research, Singapore 138632, also with the IHPC, Agency for Science, Technology and Research, Singapore 138632, and also with the National University of Singapore, Singapore 119077 (e-mail: xiao\_lingao@outlook.com; hyhy1992@gmail.com).
\IEEEcompsocthanksitem This article has supplementary downloadable material available at https://doi.org/10.1109/TPAMI.2026.3664488, provided by the authors.
\IEEEcompsocthanksitem Digital Object Identifier 10.1109/TPAMI.2026.3664488.}
}

\markboth{IEEE TRANSACTIONS ON PATTERN ANALYSIS AND MACHINE INTELLIGENCE, Feb 2026}%
{Soft Label Pruning and Quantization for Large-Scale Dataset Distillation}

\maketitle

\begin{abstract}
Large-scale dataset distillation requires storing auxiliary soft labels that can be 30-40$\times$ (ImageNet-1K) or 200$\times$ (ImageNet-21K) larger than the condensed images, undermining the goal of dataset compression. We identify two fundamental issues necessitating such extensive labels: (1) \textit{insufficient image diversity}, where high within-class similarity in synthetic images requires extensive augmentation, and (2) \textit{insufficient supervision diversity}, where limited variety in supervisory signals during training leads to performance degradation at high compression rates. To address these challenges, we propose Label Pruning and Quantization for Large-scale Distillation (LPQLD). We enhance \textit{image diversity} via class-wise batching and BN supervision during synthesis. For \textit{supervision diversity}, we introduce Label Pruning with Dynamic Knowledge Reuse to enhance \textit{label-per-augmentation diversity}, and Label Quantization with Calibrated Student-Teacher Alignment to enhance \textit{augmentation-per-image diversity}. Our approach reduces soft label storage by 78$\times$ on ImageNet-1K and 500$\times$ on ImageNet-21K while improving accuracy by up to 7.2\% and 2.8\%, respectively. 
Extensive experiments validate the superiority of LPQLD across different network architectures and other dataset distillation methods.
Code is available at \href{https://github.com/he-y/soft-label-pruning-quantization-for-dataset-distillation}{https://github.com/he-y/soft-label-pruning-quantization-for-dataset-distillation}.
\end{abstract}

\begin{IEEEkeywords}
Dataset distillation, dataset condensation, data compression, efficient learning.
\end{IEEEkeywords}
\section{Introduction}

\IEEEPARstart{W}{e} are advancing into the era of \textbf{ImageNet-level dataset condensation}, aiming to compress large datasets into much smaller, efficient counterparts~\cite{wang2018dataset, wang2022cafe, cazenavette2022distillation, zhao2021datasetGM, zhao2023dataset}.
While prior works struggled to scale due to memory constraints, recent methods~\cite{yin2023squeeze, yin2023dataset} employing a three-phase approach (squeeze, recover, relabel) have shown promise. However, a critical bottleneck emerges in the `relabel' phase: the auxiliary data, primarily \textbf{soft labels} generated under numerous augmentations across training epochs~\cite{yin2023squeeze, yin2023dataset, sun2023diversity, shao2023generalized, zhou2024self}, demand enormous storage as shown in Figure~\ref{fig:main-figure}. For ImageNet-1K, this auxiliary storage can be \textbf{30-40$\times$} larger than the condensed images themselves, and for ImageNet-21K, this ratio balloons to over \textbf{200$\times$}, severely undermining the goal of dataset compression and making large-scale distillation impractical.

We identify two fundamental issues that necessitate such extensive soft label storage. The first is \textbf{insufficient image diversity}, where synthesized images exhibit high within-class similarity (as observed in SRe$^2$L~\cite{yin2023dataset}). We attribute this to the common practice of constructing batches with samples from different classes to match global Batch Normalization (BN) statistics during image synthesis (the `recover' phase); this is confirmed by Feature Cosine Similarity and MMD (Sec.~\ref{sec: similarity-analysis}). 
The second is \textbf{insufficient supervision diversity}, where repeated supervision with the same soft labels across epochs limits the variety of supervisory signals and causes severe performance degradation (Fig.~\ref{fig: effect}). This is the core problem of LPLD~\cite{xiao2024lpld} at high compression rates. Notably, these two issues are interconnected: low image diversity drives heavier augmentation during student training, which in turn demands more diverse supervision to maintain performance.

\begin{figure}[t]
    \centering
    \includegraphics[width=\linewidth]{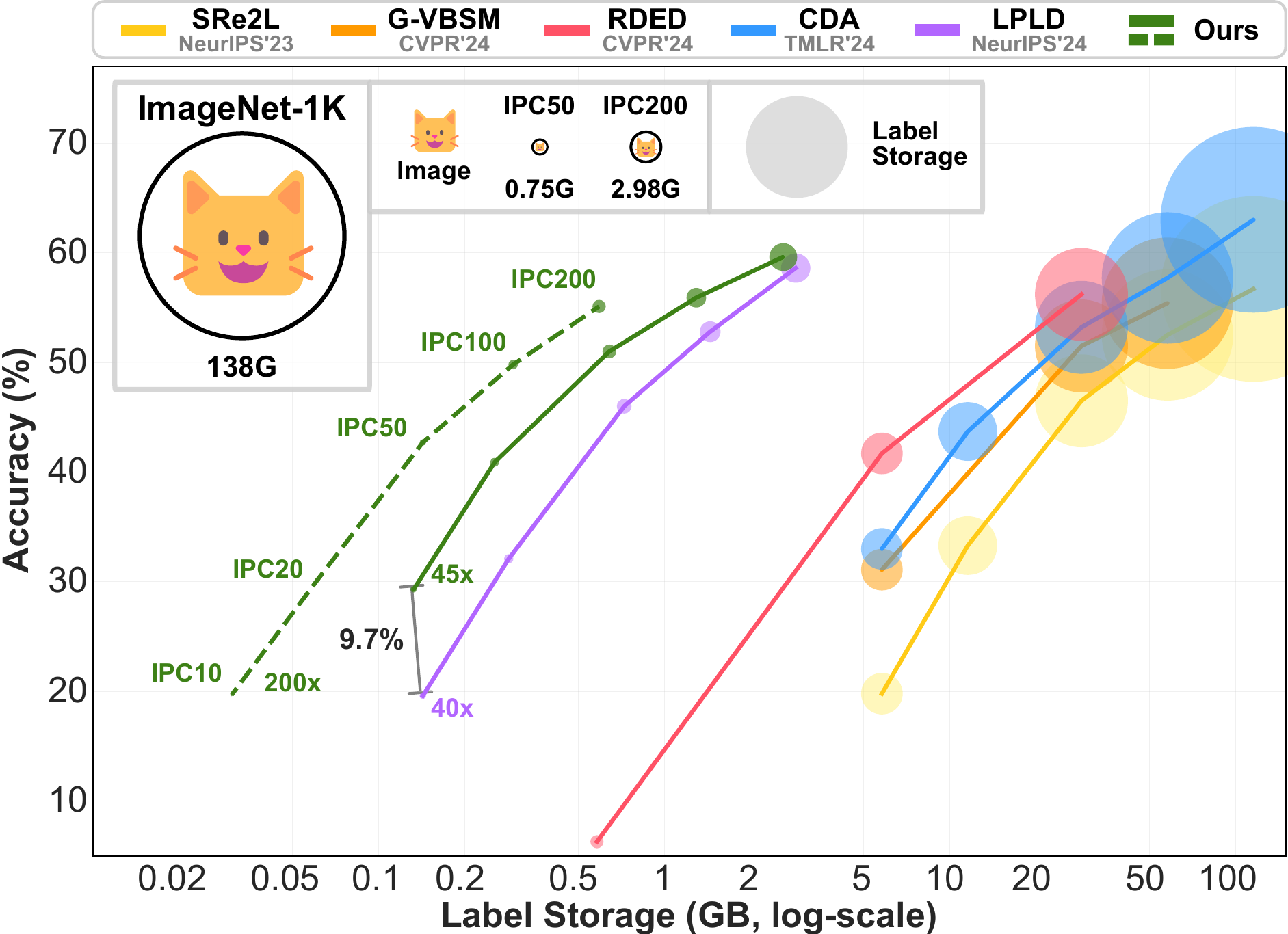}
    \caption{
    The relationship between performance and label storage needed for ImageNet-1K distillation. Our method creates a new Pareto front for the performance-storage trade-off, having \textbf{fewer soft labels than images}. Baselines~\cite{yin2023squeeze, yin2023dataset, shao2023generalized, sun2023diversity,xiao2024lpld} are shown for comparison.
    }
    \vspace{-0.4cm}
    \label{fig:main-figure}
\end{figure}

To address these dual challenges, we propose \textbf{Label Pruning and Quantization for Large-scale Distillation (LPQLD)}, a framework that systematically tackles both image diversity and supervision diversity issues.

\textbf{Addressing Image Diversity.} 
Following LPLD~\cite{xiao2024lpld}, we enhance image diversity during synthesis by batching images within the same class and introducing class-wise BN supervision, encouraging images to collectively capture diverse class characteristics. As illustrated in Fig.~\ref{fig: viz-ours} and quantified in Table~\ref{tab: cosine-similarity}, this approach yields synthetic images with lower within-class similarity and better representation of the original dataset (Fig.~\ref{fig:mmd}).

\textbf{Addressing Supervision Diversity.} 
We tackle supervision diversity from two complementary angles: label-per-augmentation diversity and augmentation-per-image diversity.

\textit{(a) Label-per-Augmentation Diversity via Dynamic Knowledge Reuse.}
At high compression rates with label pruning, each augmentation is repeatedly supervised by the same soft label across training epochs, severely limiting supervision diversity. To address this, we propose Dynamic Knowledge Reuse (DKR), a theoretically-grounded strategy employing temperature annealing to extract different supervisory signals from the same label across training epochs (Sec.~\ref{sec: dynamic-knowledge-reuse}). This enables each retained label to provide varied supervision throughout training without additional storage overhead.

\textit{(b) Augmentation-per-Image Diversity via Label Quantization with Calibrated Alignment.}
Relying solely on label pruning drastically reduces augmentations per image, potentially limiting each image to only one augmentation. We introduce Label Quantization (Q), which stores only top-k pre-softmax logits per label to enable more augmentation-label pairs under the same storage budget. However, quantization alters the teacher's probability distribution. Our Calibrated Student-Teacher Alignment (CA) addresses this by dynamically adjusting student distribution via grid-search optimization (Sec.~\ref{sec: student-teacher-alignment}).

\textit{(c) Balancing the Trade-off.}
With a fixed storage budget, increasing augmentation-per-image diversity (via quantization) necessarily reduces the number of labels available per augmentation (via pruning). We provide a comprehensive Pareto frontier analysis to identify optimal pruning-quantization configurations that balance these two types of supervision diversity for different dataset scales (Sec.~\ref{sec: additional-analysis}), offering practical guidance for deploying LPQLD across diverse scenarios.

\begin{figure}[t] 
    \begin{subfigure}{.48\columnwidth}
        \centering
        \includegraphics[width=\textwidth]{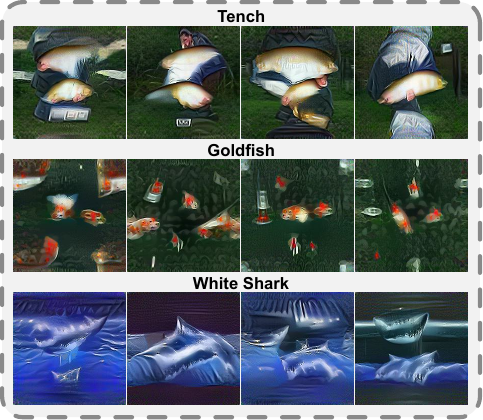}
        \caption{SRe$^2$L~\cite{yin2023squeeze}.}
        \label{fig: viz-sre2l}
    \end{subfigure}\hfill
    \begin{subfigure}{.48\columnwidth}
        \centering
        \includegraphics[width=\textwidth]{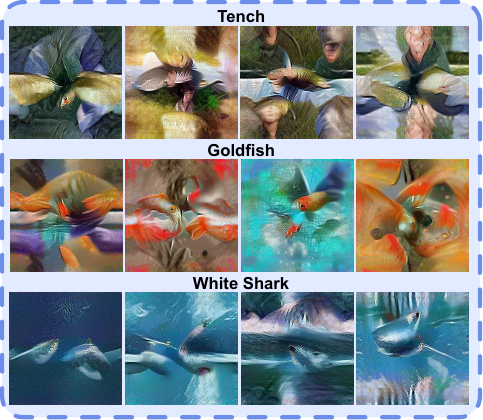}
        \caption{Ours (IPC=10).}
        \label{fig: viz-ours}
    \end{subfigure}
    \caption{Visual comparison of synthesized images (IPC=10) for ImageNet-1K classes. Our method exhibits noticeably higher \textbf{within-class visual diversity}.}
    \label{fig: viz}
\end{figure}

\begin{table}[t]
\centering
\caption{Comparison between LPLD and LPQLD.}
\setlength{\tabcolsep}{0.3em}
\label{tab:lpld-lpqld-comparison}
\scriptsize
\begin{tabular}{@{}p{0.18\columnwidth}p{0.55\columnwidth}cc@{}}
\toprule
\textbf{Category} & \textbf{Feature} & \textbf{LPLD} & \textbf{LPQLD} \\
\midrule
\begin{tabular}[t]{@{}l@{}}Image \\ Diversity\end{tabular} & \textbf{Class-wise Batching \& Supervision} \newline {\scriptsize Enhances within-class diversity} & \textcolor{mygreen}{\ding{51}} & \textcolor{mygreen}{\ding{51}} \\
\midrule
\multirow{2}{0.18\columnwidth}{Label Efficiency} & \textbf{Label Pruning} \newline {\scriptsize Reduces number of augmentation pairs} & \textcolor{mygreen}{\ding{51}} & \textcolor{mygreen}{\ding{51}} \\
\cmidrule(lr){2-4}
& \textbf{Label Quantization (Top-k)} \newline {\scriptsize Compresses individual label size} & \textcolor{red}{\ding{55}} & \textcolor{mygreen}{\ding{51}} \\
\midrule
\multirow{2}{0.18\columnwidth}{Supervision Diversity} & \textbf{Dynamic Knowledge Reuse (DKR)} \newline {\scriptsize Extracts diverse signals from pruned labels} & \textcolor{red}{\ding{55}} & \textcolor{mygreen}{\ding{51}} \\
\cmidrule(lr){2-4}
& \textbf{Calibrated Student-Teacher Alignment (CA)} \newline {\scriptsize Aligns student with quantized teacher} & \textcolor{red}{\ding{55}} & \textcolor{mygreen}{\ding{51}} \\
\bottomrule
\end{tabular}
\vspace{0.5em}
\end{table}

\begin{figure}[t] 
    \begin{subfigure}{.48\columnwidth}
        \centering
        \includegraphics[width=\textwidth]{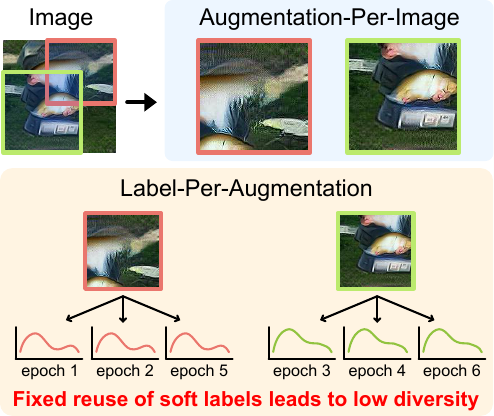}
        \caption{LPLD~\cite{xiao2024lpld}.}
        \label{fig: effect-lpld}
    \end{subfigure}\hfill
    \begin{subfigure}{.48\columnwidth}
        \centering
        \includegraphics[width=\textwidth]{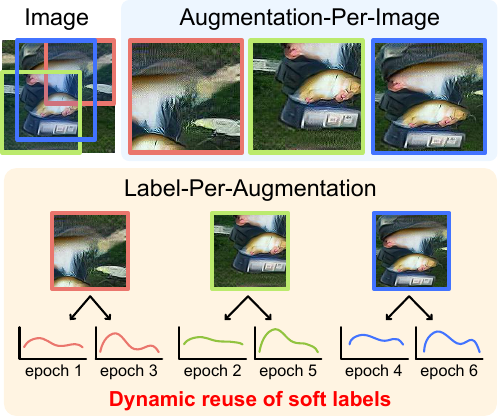}
        \caption{Ours.}
        \label{fig: effect-lpqld}
    \end{subfigure}
    \caption{Illustration of supervision diversity of LPLD and LPQLD (Ours) under the same storage and training budget.}
    \label{fig: effect}
    \vspace{0.5em} 
\end{figure}

\begin{table}[t]
\centering
\caption{Necessity of LPQLD Framework. P: Pruning, Q: Quantization, DKR: Dynamic Knowledge Reuse, CA: Calibrated Alignment.}
\label{tab:evolutionary-necessity}
\setlength{\tabcolsep}{0.5em}
\scriptsize
\begin{tabular}{@{}lll@{}}
\toprule
\textbf{Method} & \textbf{Components} & \textbf{Bottleneck} \\
\midrule
\begin{tabular}[t]{@{}l@{}}
\textbf{Baseline} \\
(e.g., SRe$^2$L)
\end{tabular} & Full Soft Labels & 
\begin{tabular}[t]{@{}l@{}}
\textbf{Raw Data Size} \\
Label Storage $>$ Image Storage by 200$\times$
\end{tabular} \\
\midrule
\begin{tabular}[t]{@{}l@{}}
Single Technique
\end{tabular} & 
\begin{tabular}[t]{@{}l@{}}
\textbf{P} \textit{or} \textbf{Q}
\end{tabular} & 
\begin{tabular}[t]{@{}l@{}}
\textbf{Auxiliary Data Dominance} \\
Indices/Augmented Info. dominate total storage
\end{tabular} \\
\midrule
\begin{tabular}[t]{@{}l@{}}
Naive Combine
\end{tabular} & 
\begin{tabular}[t]{@{}l@{}}
P \textbf{+} Q 
\end{tabular} & 
\begin{tabular}[t]{@{}l@{}}
\textbf{Accuracy Collapse} \\
Sparse supervision \& mismatched distributions
\end{tabular} \\
\midrule
\begin{tabular}[t]{@{}l@{}}
\textbf{LPQLD} (Ours)
\end{tabular} & 
\begin{tabular}[t]{@{}l@{}}
P (+\textbf{DKR}) + \\ 
Q (+\textbf{CA})
\end{tabular} & 
\begin{tabular}[t]{@{}l@{}}
\textbf{Both Resolved} \\
DKR fixes sparsity, CA fixes mismatch
\end{tabular} \\
\bottomrule
\end{tabular}
\end{table}

Table~\ref{tab:lpld-lpqld-comparison} summarizes the key differences between LPLD and LPQLD across three categories, highlighting LPQLD's innovations in label efficiency (quantization) and supervision diversity (DKR and CA).
Table~\ref{tab:evolutionary-necessity} shows why these components are essential solutions to fundamental bottlenecks.
The baseline suffers from impractical label storage (200$\times$ larger than images) and single techniques (\textbf{P} or \textbf{Q}) hit auxiliary data dominance.
Naively combining them (P \textbf{+} Q) is possible, but it leads to accuracy collapse due to sparse supervision and mismatched distributions.
Our LPQLD resolves both bottlenecks: P (+\textbf{DKR}) addresses sparsity by extracting varied supervisory signals via temperature annealing, while Q (+\textbf{CA}) fixes distribution mismatch by dynamically adjusting student temperature.
This establishes LPQLD as a systematic framework where each component addresses specific bottlenecks induced by the interaction of pruning and quantization.

The contributions of this work can be summarized as follows:
\begin{enumerate}[leftmargin=*, itemsep=0pt, topsep=0pt]
    \item We identify \textbf{supervision diversity} as the fundamental bottleneck at high compression rates and provide a unified LPQLD framework for soft label compression.
    
    \item We decompose \textbf{supervision diversity} into \textbf{augmentation-per-image diversity} and \textbf{label-per-augmentation diversity}, and propose two modules combined with Pareto front analysis to enhance both types of diversity.
    
    \item LPQLD achieves unprecedented storage efficiency (\textbf{78$\times$} on ImageNet-1K, \textbf{500$\times$} on ImageNet-21K) while maintaining or exceeding LPLD's performance across diverse architectures and distillation methods.
\end{enumerate}

\section{Related Works}
\label{sec: related-work}

\textbf{Large-Scale Dataset Distillation.}
Recent advancements in dataset distillation have pushed towards handling \textbf{large-scale datasets} like ImageNet-1K~\cite{deng2009imagenet} and beyond. While early methods struggled with the memory demands and scale~\cite{cazenavette2022distillation, kimICML22}, techniques like TESLA~\cite{cui2023scaling} and particularly multi-stage methods such as SRe$^2$L~\cite{yin2023squeeze} have enabled distillation of full ImageNet-1K. SRe$^2$L notably decouples the process into squeezing, recovering (image synthesis), and relabeling phases. Consequently, significant research effort has concentrated on improving the \textbf{quality of the synthesized images} during the recovery phase, exploring various strategies to generate more informative condensed datasets including through optimization~\cite{yin2023dataset, sun2023diversity, dwa2024neurips, shao2023generalized, zhong2025going, bo2025understanding, cui2025dataset, asano2023the, shen2024deltsimplediversitydrivenearlylate, liu2023dataset, he2024you, loo2024large, wang2024emphasizing, zhou2024self} or through generative methods~\cite{su2024d, li2024generative, chen2025influence}.
To attain decent performance, the \textbf{relabeling phase}, responsible for generating extensive soft labels from a teacher model under numerous augmentations, plays a crucial role.
However, soft labels have received comparatively less attention regarding their efficiency~\cite{zhang2024breaking, xiao2025rethinkdc}.
The relabeling phase, while crucial for achieving state-of-the-art performance, introduces a significant \textbf{storage bottleneck} due to the massive volume of soft label data required, diverging from the goal of dataset distillation.

\begin{figure*}[t]
    \centering
    \includegraphics[width=1\linewidth]{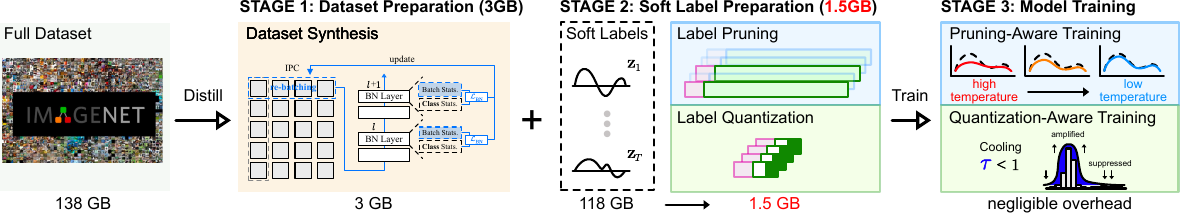}
    \caption{Pipeline overview of LPQLD. In STAGE 1, we generate a synthetic dataset by matching class-wise BN statistics.
    In STAGE 2, we generate corresponding soft labels. To reduce soft label storage, label pruning and label quantization are conducted.
    In STAGE 3, compression-aware model training is performed with distilled datasets and compressed soft labels.}
    \label{fig:pipeline}
\end{figure*}

\textbf{Soft Label Compression.}
Despite the extensive storage requirements of soft labels in methods like SRe$^2$L~\cite{yin2023squeeze}, directly addressing this \textbf{label storage bloat} has been largely overlooked in dataset distillation research. Some works have investigated how to better leverage existing soft labels during student training~\cite{qin2024label, shen2023ferkd}, but without focusing on reducing their storage footprint, and in some cases even increasing it~\cite{kang2024label}.

Teddy~\cite{yu2025teddy} proposes using a proxy model for dynamic label generation, which eliminates pre-computation storage but introduces efficiency trade-offs and additional training overhead.
FKD~\cite{shen2022fast}, though designed for traditional knowledge distillation, employs label quantization by storing both indices and quantized post-softmax logits for recovery during training. However, it has limitations: (1) it was designed for full datasets rather than distilled ones, creating unique challenges in large-scale dataset distillation where limited image supervision is available, and (2) it does not address the storage requirements of auxiliary augmentation information.
In this paper, we follow the naming of ``label quantization'' to be consistent with previous works~\cite{shen2022fast,xiao2024lpld}.

\section{Method}
\label{sec:method-implicit}

In this section, we present our Label Pruning and Quantization for Large-scale Distillation (LPQLD) approach, which addresses the soft label storage bottleneck while maintaining or improving performance. As illustrated in Fig.~\ref{fig:pipeline}, our method consists of three key components: (1) diverse synthetic dataset generation to enhance \textit{image diversity}, (2) soft label compression via pruning and quantization to enhance \textit{supervision diversity}, and (3) compression-aware training with dynamic knowledge reuse and calibrated student-teacher alignment. This integrated pipeline effectively reduces storage requirements while preserving the knowledge contained in soft labels.

\begin{figure}[t]
    \centering
    \includegraphics[width=.9\linewidth]{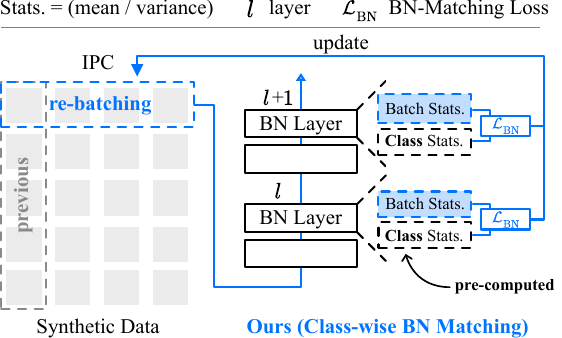}
    \caption{
        Illustration of our diverse synthetic dataset generation. Our method encourages collaboration among images of the same class, yielding a natural separation between classes. Here, the classification loss is omitted for illustration.
    }
    \label{fig:bn-loss}
    \vspace{-0.3cm}
\end{figure}

\subsection{Preliminary}
\textbf{Synthetic Dataset Generation.}
We begin by considering the standard Batch Normalization (BN) transformation:
\begin{equation}
    y = \gamma \left( \frac{\boldsymbol{x} - \mu}{\sqrt{\sigma^2 + \epsilon}} \right) + \beta,
\end{equation}
where $\boldsymbol{x}$ represents the input feature maps to the BN layer, $y$ is the normalized output, $\gamma$ and $\beta$ are learnable scaling and shifting parameters, while $\mu$ and $\sigma^2$ represent the mean and variance of the inputs. The constant $\epsilon$ is introduced for numerical stability to avoid division by zero. During training, running statistics (mean and variance) are maintained and later used during inference to approximate the unavailable true statistics of testing data.

Inspired by SRe$^2$L~\cite{yin2023dataset} and DeepInversion~\cite{yin2020dreaming}, we generate synthetic imagery by aligning the BN statistics of intermediary layers. In particular, the target is to minimize the discrepancy between the BN statistics computed from the synthetic images $\widetilde{\boldsymbol{x}}$ and the stored running estimates from the full dataset $\mathcal{T}$:

{\footnotesize%
\begin{equation}
\begin{aligned}
\mathcal{L}_{\mathrm{BN}}(\widetilde{\boldsymbol{x}}) & =\sum_l\left\|\mu_l(\widetilde{\boldsymbol{x}})-\mathbb{E}\left(\mu_l \mid \mathcal{T}\right)\right\|_2+\sum_l\left\|\sigma_l^2(\widetilde{\boldsymbol{x}})-\mathbb{E}\left(\sigma_l^2 \mid \mathcal{T}\right)\right\|_2 \\
& \approx \sum_l\left\|\mu_l(\widetilde{\boldsymbol{x}})-\mathbf{BN}_l^{\mathrm{RM}}\right\|_2+\sum_l\left\|\sigma_l^2(\widetilde{\boldsymbol{x}})-\mathbf{BN}_l^{\mathrm{RV}}\right\|_2,
\end{aligned}
\end{equation}
}%
where $\mathbf{BN}_l^{\mathrm{RM}}$ and $\mathbf{BN}_l^{\mathrm{RV}}$ denote the running mean and variance that serve as surrogates for the expected values. Here, $\mu_l(\widetilde{\boldsymbol{x}})$ and $\sigma_l^2(\widetilde{\boldsymbol{x}})$ are the per-layer statistics computed on a batch of synthetic images $\widetilde{\boldsymbol{x}}$.

This BN loss acts as a regularizer to the classification loss $\mathcal{L}_{\mathrm{CE}}$, yielding the joint objective:
\begin{equation}
\underset{\widetilde{\boldsymbol{x}}}{\arg \min} \quad \underbrace{\ell\left(\boldsymbol{\theta}_{\mathcal{T}}\left(\widetilde{\boldsymbol{x}}\right), \boldsymbol{y}\right)}_{\mathcal{L}_{\mathrm{CE}}}+\alpha \cdot \mathcal{L}_{\mathrm{BN}}\left(\widetilde{\boldsymbol{x}}\right),
\label{eq:sre2l-objective}
\end{equation}
where $\boldsymbol{\theta}_{\mathcal{T}}$ is the model pretrained on the original dataset and the scalar $\alpha$ adjusts the influence of the BN regularization.

\textbf{Image Relabeling.}
In model distillation, huge teachers impose additional training burdens to computing demands on top of the existing two bottlenecks, deep network propagation and data loading~\cite{shen2022fast}.
FKD~\cite{shen2022fast} introduces the relabel phase that pre-computes and stores hyper-parameters of augmentations $\mathcal{\{F\}}$ together with soft label sets $\mathcal{\{P\}}$, where $\mathcal{\{P\}} = \text{softmax}(\mathbf{z}, \tau)$ represents probability distributions derived from the logit outputs $\mathbf{z}$ of the teacher model through softmax with temperature $\tau$. The storage size of augmentation hyper-parameters $|\mathcal{\{F\}}|$ is negligible compared to the size of full soft labels $|\mathcal{\{P\}}|$.
During the model training phase, the stored label files are loaded; images undergo the same augmentation by applying $\mathcal{\{F\}}$, making the supervision from $\mathcal{\{P\}}$ valid.
To efficiently store the soft labels, label quantization is applied, and merely top-$k$ (e.g., $k=5$) soft labels are selected for each image per augmentation. 
Note that both indices and values are stored, so the actual storage for top-$k$ is equivalent to $2\times k$.
For the ImageNet-1K dataset, this cuts down the soft label storage $\mathcal{\{P\}}$ to 100$\times$ by taking top-$5$ predictions. 

SRe$^2$L~\cite{yin2023squeeze} adopts the idea of relabeling process~\cite{shen2022fast} into dataset distillation, and it is the first to achieve effective large-scale (ImageNet-1K-scale) dataset distillation.
However, due to the large discrepancy in the dataset size between full ImageNet-1K and distilled ImageNet-1K, the label quantization process does not effectively apply.
The full ImageNet-1K has over 1000K images to accommodate information loss from each soft label, while distilled ImageNet-1K has only 10K images (i.e., 10 images per class). 
As a consequence, by giving up the label quantization, soft label size $|\mathcal{\{P\}}|$ is about 40$\times$ of the distilled dataset size $|\mathcal{S}|$.

\subsection{Image Diversity via Diverse Synthetic Dataset Generation}
\label{sec: similarity-analysis}

The excessive storage requirements for soft labels in current dataset distillation methods fundamentally contradict the core motivation of dataset distillation—creating compact, efficient dataset representations. We hypothesize that a key underlying factor is \textit{insufficient image diversity} within synthetic image classes, which necessitates extensive augmentation and corresponding soft labels to provide varied supervision. Before introducing our compression techniques, we first establish metrics to quantify this diversity problem and then propose methods to address it at the image synthesis stage.

\subsubsection{Intra-Class Diversity: Feature Cosine Similarity}
A key measure of diversity is the similarity of images within the same class. We quantify this by calculating the cosine similarity of their feature representations. Lower cosine similarity indicates that the images are less alike, thus the intra-class diversity is higher. This observation is formally stated as:
\begin{proposition}
A reduction in cosine similarity among features of images within the same class reflects increased diversity.
\end{proposition}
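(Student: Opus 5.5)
To make the statement provable, I first have to fix what ``diversity'' means. I would take the features $\boldsymbol{f}_1,\dots,\boldsymbol{f}_n$ of the $n$ synthetic images of one class, extracted by the pretrained model $\boldsymbol{\theta}_{\mathcal{T}}$. I would normalize them to $\hat{\boldsymbol{f}}_i=\boldsymbol{f}_i/\|\boldsymbol{f}_i\|_2$ on the unit sphere. Diversity is then the mean pairwise squared Euclidean distance on the sphere,
\begin{equation*}
D=\frac{1}{n(n-1)}\sum_{i\neq j}\|\hat{\boldsymbol{f}}_i-\hat{\boldsymbol{f}}_j\|_2^2 ,
\end{equation*}
or, equivalently up to a constant, the total variance $V=\frac{1}{n}\sum_i\|\hat{\boldsymbol{f}}_i-\bar{\boldsymbol{f}}\|_2^2$ about the class centroid $\bar{\boldsymbol{f}}=\frac{1}{n}\sum_i\hat{\boldsymbol{f}}_i$. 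The statement then becomes: $D$ and $V$ are strictly decreasing functions of the average within-class cosine similarity
\begin{equation*}
\bar{s}=\frac{1}{n(n-1)}\sum_{i\neq j}\cos(\boldsymbol{f}_i,\boldsymbol{f}_j).
\end{equation*}

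The proof is a short identity computation. For unit vectors, $\|\hat{\boldsymbol{f}}_i-\hat{\boldsymbol{f}}_j\|_2^2=2-2\cos(\boldsymbol{f}_i,\boldsymbol{f}_j)$, so averaging over pairs gives $D=2(1-\bar{s})$. For the variance, expand
\begin{equation*}
V=1-\|\bar{\boldsymbol{f}}\|_2^2 ,\qquad \|\bar{\boldsymbol{f}}\|_2^2=\frac{1}{n^2}\Bigl(n+\sum_{i\neq j}\cos(\boldsymbol{f}_i,\boldsymbol{f}_j)\Bigr)=\frac{1}{n}+\frac{n-1}{n}\,\bar{s}.
\end{equation*}
This yields $V=\frac{n-1}{n}(1-\bar{s})$. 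Both diversity measures are therefore affine and strictly decreasing in $\bar{s}$, so a reduction in $\bar{s}$ increases $D$ and $V$ by an exactly proportional amount. I would also note that the same identity gives the bound $\bar{s}\ge -1/(n-1)$, with equality exactly when the normalized features sum to zero, which is the maximally diverse configuration.

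The main obstacle is not the algebra but justifying that $D$ or $V$ is the right notion of ``diversity.'' Cosine similarity ignores feature norms, and the statement is about semantic diversity in the teacher's feature space. I would handle this by defining diversity explicitly as angular spread in that space, in line with the MMD/feature-based analysis the section goes on to use. I would then state the proposition as the equivalence above, and remark that it holds only relative to this chosen feature map; it is not a claim about pixel-level diversity.
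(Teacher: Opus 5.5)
The paper gives no proof of this proposition. It is asserted as an interpretive convention right after the sentence ``Lower cosine similarity indicates that the images are less alike, thus the intra-class diversity is higher.'' The term ``diversity'' is never defined, and the proposition is then used only to read the cosine-similarity table.

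Your route therefore differs in kind rather than in technique: you fix a definition (angular spread of the normalized teacher features) and turn the claim into an exact identity. Your computations are correct: $D=2(1-\bar{s})$, $V=1-\|\bar{\boldsymbol{f}}\|_2^2=\frac{n-1}{n}(1-\bar{s})$, and $\|\bar{\boldsymbol{f}}\|_2^2\ge 0$ gives $\bar{s}\ge -1/(n-1)$.

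Compared with the paper's bare assertion, this gives a checkable statement with explicit scope: it is relative to the chosen feature map and blind to feature norms. It also shows that average cosine similarity is not merely correlated with dispersion but is an affine reparametrization of it. The price, which you acknowledge, is that the substantive content sits in the definition, so under your definition the proposition is close to a tautology.

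One refinement: $D$ and $V$ are equivalent only at fixed $n$, since $V=\frac{n-1}{2n}D$. The paper compares settings with different numbers of images per class: IPC 50, 100 and 200, plus the full dataset, whose classes also differ in size. You should therefore state the result with $D$. Its relation to $\bar{s}$ does not depend on $n$, so it also survives the averaging over the 1K classes that the table reports.
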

Mathematically, the cosine similarity between the features of two images is expressed as:
\begin{equation}
\mathrm{cos~similarity} :=\frac{\sum_{i=1}^n f(\boldsymbol{\widetilde{x}_{c,i}})~f(\boldsymbol{\widetilde{x}^\prime}_{c,i})}{\sqrt{\sum_{i=1}^n f(\boldsymbol{\widetilde{x}_{c,i}})^2} \sqrt{\sum_{i=1}^n f(\boldsymbol{\widetilde{x}^\prime}_{c,i})^2}},
\end{equation}
where $\boldsymbol{\widetilde{x}_c}$ and $\boldsymbol{\widetilde{x}^\prime}_c$ are two images from class $c$, $f(\cdot)$ is the feature extractor, and $n$ is the dimensionality of the feature vector.

\subsubsection{Inter-Dataset Consistency: Maximum Mean Discrepancy (MMD)}
While intra-class differences are important, it is equally crucial that the synthetic dataset faithfully captures the original data distribution. We assess this through Maximum Mean Discrepancy (MMD), which compares the feature distributions of the synthetic and real datasets:
\begin{proposition}
A lower MMD value indicates that the synthetic dataset closely approximates the original dataset's range of features, thus reflecting higher diversity.
\end{proposition}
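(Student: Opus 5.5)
To make the proposition precise, I fix a characteristic kernel $k$ (for instance a Gaussian RBF kernel) on the feature space of $f(\cdot)$. I let $P$ be the feature distribution of the real dataset $\mathcal{T}$ and $Q$ that of the synthetic set. Then
\begin{equation}
\mathrm{MMD}^2(P,Q)=\big\|\mathbb{E}_{P}[\phi(f(\boldsymbol{x}))]-\mathbb{E}_{Q}[\phi(f(\widetilde{\boldsymbol{x}}))]\big\|_{\mathcal{H}}^2 ,
\end{equation}
where $\phi$ is the canonical feature map into the RKHS $\mathcal{H}$. The statement is not a theorem in the strict sense, so I would prove two precise surrogate claims. \textbf{(i)} Lower MMD means $Q$ is closer to $P$ uniformly over a rich class of test functions, so $Q$ covers the features of the real data. \textbf{(ii)} A synthetic set that has collapsed onto a narrow region (low diversity) necessarily incurs a strictly positive MMD lower bound, which shrinks as the set spreads out to match $P$.

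For (i), I would use the dual characterization
\begin{equation}
\mathrm{MMD}(P,Q)=\sup_{\|h\|_{\mathcal{H}}\le 1}\big|\mathbb{E}_P h-\mathbb{E}_Q h\big|.
\end{equation}
The step is to take $h$ to be smoothed indicator-like functions of feature regions, i.e.\ normalized kernel bumps $h_z=k(z,\cdot)/\sqrt{k(z,z)}$ centered at points $z$ of the real support. This gives the pointwise bound
\begin{equation}
\big|\mathbb{E}_P k(z,\cdot)-\mathbb{E}_Q k(z,\cdot)\big|\le \sqrt{k(z,z)}\,\mathrm{MMD}.
\end{equation}
Consequently, every region that carries real kernel mass $\mathbb{E}_P k(z,\cdot)\ge \delta$ also receives synthetic mass at least $\delta-\sqrt{k(z,z)}\,\mathrm{MMD}$. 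Small MMD thus forces synthetic features to populate all modes of the real feature range, which is the ``range of features'' in the statement.

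For (ii), I would take the extreme low-diversity case $Q=\delta_{\bar z}$, where all synthetic features coincide. Expanding the squared norm gives
\begin{equation}
\mathrm{MMD}^2=\mathbb{E}_{P\times P}k(z,z')-2\,\mathbb{E}_P k(z,\bar z)+k(\bar z,\bar z).
\end{equation}
This is bounded below by a constant of the form $k(\bar z,\bar z)-\mathbb{E}_{P\times P}k$, which is positive unless $P$ itself is degenerate. More generally, for $Q$ uniform on $n$ points, I would decompose $\mathrm{MMD}^2$ into a within-synthetic term $\frac{1}{n^2}\sum_{i,j}k(z_i,z_j)$ and a cross term. The within-synthetic term is an average pairwise similarity, the kernel analogue of the feature cosine similarity of the previous proposition. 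Lowering it, i.e.\ increasing diversity, while keeping the cross term fixed decreases MMD. This links the two propositions.

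The main obstacle is that the implication is not literally true in the converse direction. Spreading synthetic points arbitrarily also reduces the within term but can destroy the cross term. Hence ``lower MMD $\Rightarrow$ higher diversity'' holds only relative to the target $P$: diversity must be measured as coverage of $P$'s support rather than as raw spread. I would therefore state the result as a coverage guarantee via the bound in (i), and acknowledge that the characteristic-kernel assumption is needed for $\mathrm{MMD}=0\iff P=Q$.
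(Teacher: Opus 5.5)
\paragraph{How this relates to the paper.} The paper gives no proof of this proposition. It is stated as an interpretive premise for using MMD as a diversity metric, followed only by the empirical estimator and the measurements in Fig.~\ref{fig:mmd}. Your formalization therefore goes beyond the paper.

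Part (i) is sound. The bump $h_z=k(z,\cdot)/\sqrt{k(z,z)}$ has unit RKHS norm, so the kernel-smoothed coverage bound follows directly from the dual characterization. It also applies verbatim to the paper's estimator, because $\hat{\mathcal{K}}_{\mathcal{T},\mathcal{T}}+\hat{\mathcal{K}}_{\mathcal{S},\mathcal{S}}-2\hat{\mathcal{K}}_{\mathcal{T},\mathcal{S}}$ (with diagonal terms included) is exactly $\mathrm{MMD}^2$ between the two empirical feature measures. Note that ``populating all modes'' is only guaranteed at the resolution of the kernel bandwidth.

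Your reading of diversity as coverage of $P$ rather than raw spread is the right one. It also matches the paper's phrasing ``approximates the original dataset's range of features''.

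\paragraph{The error in (ii).} The explicit lower bound in (ii) is false. The claim $\mathrm{MMD}^2(P,\delta_{\bar z})\ge k(\bar z,\bar z)-\mathbb{E}_{P\times P}k$ is equivalent to $\mathbb{E}_{P\times P}k\ge\mathbb{E}_P k(z,\bar z)$. This fails whenever the real features form a tight cluster around $\bar z$, because pairwise squared distances are then about twice the squared distances to the centre.

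Here is a concrete counterexample. Take a Gaussian kernel with bandwidth $\sigma$, $P$ uniform on $\{-d,d\}$, $\bar z=0$, and $u=d^2/\sigma^2$. Then $\mathrm{MMD}^2=\tfrac{1+e^{-2u}}{2}+1-2e^{-u/2}=\tfrac34u^2+O(u^3)$, while your bound equals $\tfrac{1-e^{-2u}}{2}=u+O(u^2)$. At $u=0.1$ these are roughly $0.0069$ versus $0.091$.

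\paragraph{The fix.} The reverse triangle inequality in $\mathcal{H}$ gives a correct bound:
\[
\mathrm{MMD}(P,Q)=\|\mu_P-\mu_Q\|_{\mathcal{H}}\ \ge\ \|\mu_Q\|_{\mathcal{H}}-\|\mu_P\|_{\mathcal{H}}=\sqrt{\frac{1}{n^2}\sum_{i,j}k(z_i,z_j)}-\sqrt{\mathbb{E}_{P\times P}k}.
\]
This is your bound with square roots on both terms. For $Q=\delta_{\bar z}$ and a Gaussian kernel it reads $1-\sqrt{\mathbb{E}_{P\times P}k}$. That is strictly positive and uniform in $\bar z$ whenever $P$ is not a point mass, so your qualitative conclusion survives.

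The same inequality holds for every $n$-point $Q$. It therefore replaces your heuristic of lowering the within-synthetic term ``while keeping the cross term fixed'', a condition you cannot actually control. The rigorous version is this: any synthetic set whose average pairwise kernel similarity exceeds that of the real data has MMD bounded away from zero by the gap in root-similarities. This is precisely the link to the cosine-similarity proposition you were after, and it fits Table~\ref{tab: cosine-similarity}, where every synthetic set is more self-similar than the full dataset.
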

The empirical estimation of MMD is defined as~\cite{zhang2024m3d, qin2024distributional}:
\begin{equation}
\operatorname{MMD}^2\left(\boldsymbol{P}_{\mathcal{T}}, \boldsymbol{P}_{\mathcal{S}}\right)=\hat{\mathcal{K}}_{\mathcal{T}, \mathcal{T}}+\hat{\mathcal{K}}_{\mathcal{S}, \mathcal{S}}-2 \hat{\mathcal{K}}_{\mathcal{T}, \mathcal{S}},
\end{equation}

\begin{equation}
\hat{\mathcal{K}}_{X, Y}=\frac{1}{|X| \cdot |Y|} \sum_{i=1}^{|X|} \sum_{j=1}^{|Y|} \mathcal{K}\left(f\left(x_i\right), f\left(y_j\right)\right),
\end{equation}
where $\{x_i\}_{i=1}^{|X|} \sim X$, $\{y_j\}_{j=1}^{|Y|} \sim Y$, $\mathcal{T}$ and $\mathcal{S}$ denote the real and synthetic datasets, $\mathcal{K}$ is a reproducing kernel (e.g., Gaussian), and $f(\cdot)$ maps inputs to their feature representations.

\begin{table}[t]
    \footnotesize
    \centering
    \setlength{\tabcolsep}{0.6em}
    \caption{The cosine similarity between image features. It is the average of 1K class on the synthetic ImageNet-1K dataset. Features are extracted using pretrained ResNet-18.}
    \begin{tabular}{@{}llll|l@{}}
    \toprule
    IPC    & SRe$^2$L         & CDA             & Ours              & Full Dataset \\ \midrule
    50  & $0.841 \pm 0.023$ & $0.816 \pm 0.026$ & $0.796 \pm 0.029$ & \multirow{3}{*}{$0.695 \pm 0.045$}\\
    100 & $0.840 \pm 0.016$ & $0.814 \pm 0.019$ & $0.794 \pm 0.021$ & \\
    200 & $0.839 \pm 0.011$ & $0.813 \pm 0.013$ & $0.793 \pm 0.015$ & \\ \bottomrule \\
    \end{tabular}
    \label{tab: cosine-similarity}
    \vspace{-0.5cm}
\end{table}

\begin{figure}[t]
    \centering
    \includegraphics[width=.55\linewidth]{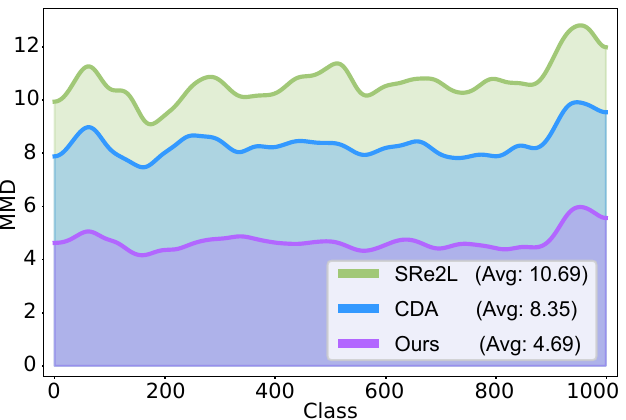}
    \caption{Comparison of Maximum Mean Discrepancy (MMD).}
    \label{fig:mmd}
    \vspace{-0.4cm}
\end{figure}

\subsubsection{Diversifying the Datasets by Class-wise Supervision}
\label{sec:diverse-sample}

In previous objective (Eq.~\ref{eq:sre2l-objective}), a subset of classes $\mathcal{B}_c$ is sampled to match global BN statistics, leading to independent generation for each image of a class and, consequently, lower intra-class diversity. Drawing inspiration from He \etal~\cite{he2024multisize}, we argue that images in the same class can benefit from working cooperatively when generating multiple images per class.

\textbf{Step 1: Intra-Class Re-batching.}
To promote image collaboration, we sample multiple images from the same class and apply class-specific supervision~\cite{zhao2021datasetGM, kimICML22}. This change, illustrated in Fig.~\ref{fig:bn-loss}, ensures that images are updated together rather than in isolation.

\textbf{Step 2: Incorporating Class-wise BN Supervision.}
Since the global running statistics may not be optimal for class-specific information, we propose maintaining separate BN statistics for each class. The additional storage cost is minimal even for large-scale datasets (e.g., ImageNet-1K; see Appendix~\ref{appendix:storage-class-stats}).

\textbf{Step 3: Designing the Class-wise Objective.}
We adjust the original objective by introducing two losses. First, we compute the classification (cross-entropy) loss using global BN statistics to ensure overall effective supervision. Second, the BN loss is now defined by comparing the synthetic images’ BN statistics with the class-specific running statistics. The updated class-wise objective is:

{\footnotesize
\vspace{-.5cm}
\begin{equation}
\label{eq:ours}
\begin{aligned}
\underset{\widetilde{\boldsymbol{x}}_c}{\arg \min} \Biggl( &\overset{\text{Cross-Entropy Loss with Global BN}}{\overbrace{- \sum_{i=1}^N y_{c,i} \log \left( \text{softmax}\Biggl( \boldsymbol{\theta}_{\mathcal{T}}\Biggl(\frac{\widetilde{\boldsymbol{x}}_{c,i} - \mathbf{BN}^\mathrm{RM}_{\text{global}}}{\sqrt{\mathbf{BN}^\mathrm{RV}_{\text{global}} + \epsilon}}\Biggr)\Biggr)_c \right)}} \\
&+ \alpha \cdot \underset{\text{BN Loss with Class-wise BN}}{\underbrace{\sum_l \left( \left\|\mu_l(\widetilde{\boldsymbol{x}}_c)-{\mathbf{BN}_{l,c}^{\mathrm{RM}}}\right\|_2 + \left\|\sigma_l^2(\widetilde{\boldsymbol{x}}_c)-{\mathbf{BN}_{l,c}^{\mathrm{RV}}}\right\|_2 \right)}} \Biggr).
\end{aligned}
\end{equation}
}%

Note that while the BN loss utilizes class-specific statistics, the computation of logits for the cross-entropy loss still relies on global statistics. This design choice is crucial because adjusting only $\mu$ and $\sigma$ without refining $\gamma$ and $\beta$ could degrade model performance.

\textbf{Superior Performance in Similarity Metrics.} 
Thanks to the adjustments described in Section~\ref{sec:diverse-sample}, our synthetic dataset demonstrates both lower intra-class feature cosine similarity (as evidenced in Table~\ref{tab: cosine-similarity}) and a much reduced MMD (see Fig.~\ref{fig:mmd}) compared to prior methods. These improvements indicate enhanced \textit{image diversity}, where the synthetic images capture a broader and more representative feature distribution of the original dataset. With this diverse synthetic dataset established, we next address the issue of superfluous soft labels.

\subsection{Soft Label Pruning and Quantization}
\label{sec: label-pruning-quantization}

\begin{figure*}[t]
    \centering
    \includegraphics[width=.9\linewidth]{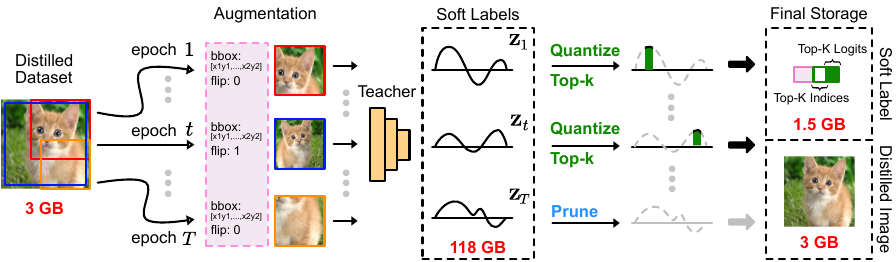}
    \caption{
    Illustration of the soft label generation process incorporating label pruning and label quantization techniques. The process begins by augmenting each dataset image for $T$ epochs, with each augmented image producing different outputs from a fixed pretrained teacher. Traditional approaches store all augmentations information and soft labels across all $T$ epochs. \textbf{Label pruning} retains only $(1-p) \times T$ epochs out of the total $T$ epochs, where $p < 1$ represents the pruning rate. \textbf{Label quantization}, an orthogonal process, reduces the storage by selecting only the Top-$k$ logits $\mathbf{z}$ to store. Note that we store the \textit{before softmax} output logits.
    }
    \label{fig:label-prune-quantize}
\end{figure*}

\begin{figure*}[t]
    \centering
    \includegraphics[width=.9\linewidth]{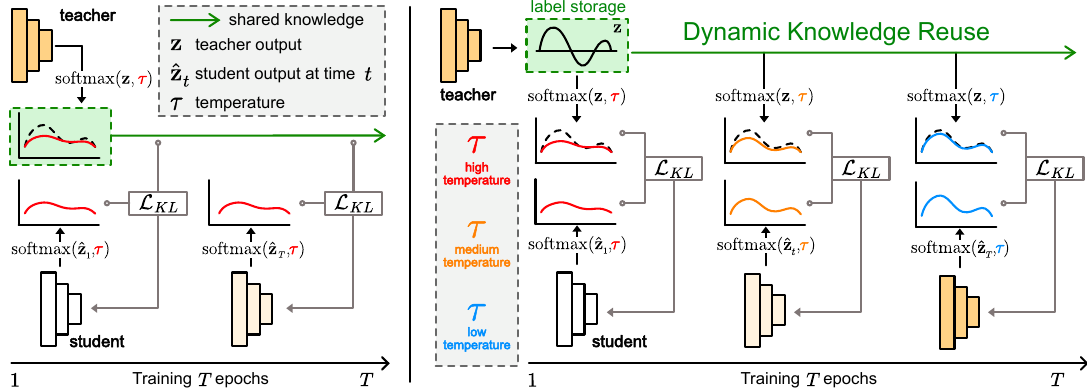}
    \caption{Illustration of knowledge reuse due to pruning that leads to insufficient amount of teacher information for all $T$ epochs of model distillation training.
    (\textbf{Left}) Naive Knowledge Reuse: despite the training stages, the supervision from teachers is fixed for a specific pre-stored augmentation.
    (\textbf{Right}) Dynamic Knowledge Reuse: to allow the same augmentation to have diverse supervision signals, we set a different temperature during \textit{softmax} operation for the same augmentation at different training epochs. In particular, we use an annealing temperature scheduler. 
    Note: \textit{before softmax} output logits are stored, and such dynamic temperature assignment does not introduce additional storage costs.
    }
    \label{fig:label-prune}
\end{figure*}

\textbf{Label Pruning.}
As illustrated in Fig~\ref{fig:label-prune-quantize} ({\color{pruningblue} blue} part), label pruning prunes both augmentation hyper-parameters $\mathcal{\{F\}}$ and soft labels $\mathcal{\{P\}}$.
For a model distillation training of $T=300$ epochs, the theoretical maximum pruning rate would be $300\times$ to ensure each image undergoes at least a single augmentation.
Benefiting from the enhanced \textit{image diversity} that implicitly reduces the number of different augmentations needed, we adopt random label pruning.
Compared to metric-based label pruning that requires selection from $T$ epochs, random pruning enjoys acceleration in the label generation phase. 
In particular, we generate labels for $(1-p) \times T$ epochs, meaning if there are 300 epochs $T$ of training in total and pruning rate $p$ is 0.9, 3 epochs of model propagation are required and actual $10\times$ speed-up is achieved.
The soft label generation process can be time-consuming especially for limited CPU resources and a low-speed disk since extensive memory movement and system IO operations are executed.

\textbf{Label Quantization.}
Orthogonal to label pruning, label quantization further reduces storage requirements to enhance \textit{augmentation-per-image diversity}.
FKD~\cite{shen2022fast} introduces different label quantization approaches, including Hardening, Smoothing, Marginal Smoothing (MS), and Marginal Re-normalization (MR).
Our method stores the set of top-$k$ pre-softmax logits $\mathbf{\{z\}}$ to enable dynamic temperature scheduling for enhancing \textit{label-per-augmentation diversity}.
By incorporating the top-k selection, the post-softmax distribution $\mathcal{P}_{i}(\mathbf{z}_i^k, \mathcal{F}_i, \tau)$ is defined as:
\begin{equation}
\begin{aligned}
    \mathcal{P}_{i}(\mathbf{z}_i^k, \mathcal{F}_i, \tau) &= \mathrm{softmax}(\mathbf{z}_i^k, \tau), \quad\mathrm{where} \\
    \mathbf{z}_i^k &= \mathrm{topk}(\theta_{\mathcal{T}}\left(\mathcal{A}(x_{i}, \mathcal{F}_{i})\right), k).
\end{aligned}
\label{eq: z-quantization}
\end{equation}

\subsection{Supervision Diversity 1 via Dynamic Knowledge Reuse}
\label{sec: dynamic-knowledge-reuse}
After label pruning, the remaining labels must be reused across multiple training epochs to provide sufficient supervision throughout the entire training process. Simply repeating the same supervision signals is possible but limits \textit{label-per-augmentation diversity}. The challenge is maximizing knowledge transfer from this reduced set of labels.

\textbf{Dynamic Knowledge Reuse.}
We propose Dynamic Knowledge Reuse, which leverages temperature annealing to generate diverse supervision signals from the same pruned labels, enhancing \textit{label-per-augmentation diversity}. Temperature in softmax controls the entropy of probability distributions, higher temperatures produce smoother distributions that reveal more class similarities, while lower temperatures create sharper distributions focusing on likely classes.

The key insight is that by varying temperature over time, we can create a curriculum learning effect that enhances \textit{label-per-augmentation diversity}: starting with higher temperatures to learn broad class relationships, then gradually transitioning to lower temperatures to refine specific class boundaries.

\textbf{Storage-Efficient Implementation.}
Naively implementing temperature annealing would require storing probability distributions for each temperature, which is prohibitively expensive. Instead, we store only the pre-softmax logits $\mathbf{z}_i = \theta_{\mathcal{T}}(\mathcal{A}(x_{i}, \mathcal{F}_{i}))$ once and dynamically compute the probability distribution for any temperature during training as shown in Fig.~\ref{fig:label-prune}:
\begin{equation}
    p_i^{(\tau_t)} = \frac{\exp(z_i/\tau_t)}{\sum_{j=1}^C \exp(z_j/\tau_t)}.
\end{equation}

This approach provides significant storage savings while enabling exact recovery of any temperature-specific distribution for \textit{label-per-augmentation diversity}. For a $C$-class problem, we store only $C$ values per image instead of $C$ values for each of the many temperature settings we would use throughout training.

\textbf{Theoretical Justification for Label-Pruned Temperature Annealing.}
Annealing KD~\cite{jafari2021annealing} established that temperature annealing creates a beneficial learning curriculum by leveraging the VC-dimension theory~\cite{vapnik1999overview, mirzadeh2020improved}. The key theoretical advantage they proved was that temperature annealing leads to better convergence compared to standard KD. This can be expressed through the inequality~\cite{jafari2021annealing, lopez2015unifying}:
\begin{equation}
R_{\tau}(f_s^{\tau}) - R_{\tau}(f_t^{\tau}) \leq O\left(\frac{|\mathcal{H}_s|_c}{N^{\alpha_{st}^{\tau}}}\right) + \varepsilon_{st}^{\tau} \leq O\left(\frac{|\mathcal{H}_s|_c}{N^{\alpha_{st}}}\right) + \varepsilon_{st}
\end{equation}
where $R(\cdot)$ is the expected error, $f_s^{\tau}$ and $f_t^{\tau}$ represent student and teacher functions at temperature $\tau$, and $\alpha_{st}^{\tau} \geq \alpha_{st}$ indicates better convergence rates with annealing.

In our label-pruned setting where we reuse a fraction $\lambda$ of teacher predictions, we can adapt this inequality to account for repeated supervision:
\begin{equation}
R_{\tau}(f_s^{\tau}) - R_{\tau}(f_t^{\tau}) \leq O\left(\frac{|\mathcal{H}_s|_c}{(k(\tau) \cdot |\mathcal{D}_t^{\lambda}|)^{\alpha_{st}^{\tau}}}\right) + \varepsilon_{st}^{\tau}
\end{equation}
where $\mathcal{D}_t^{\lambda}$ is our pruned subset with pruning ratio $\lambda$ and $k(\tau)$ represents the effective number of times samples are reused up to temperature $\tau$.
This adapted bound thus suggests that the benefits of annealing are preserved, as the effective sample reuse $k(\tau)$ helps to maintain a strong learning signal. 

The benefits of Annealing KD remain valid in our label-pruned setting for two main reasons:
\textbf{First}, the core mechanism of temperature annealing does not depend on having unique teacher predictions for each training instance. The same logits viewed through different temperatures still create the desired curriculum effect from smooth to sharp distributions, effectively creating \textit{label-per-augmentation diversity} in the prediction space.
\textbf{Second}, high \textit{label-per-augmentation diversity} from repeatedly using the same predictions through different temperatures compensates for the reduced quantity of supervision. While we store only a fraction $\lambda$ of the original teacher predictions, reusing them with an annealing temperature schedule allows the student to progressively learn more difficult aspects of the same predictions.

\subsection{Supervision Diversity 2 via Calibrated Alignment}
\label{sec: student-teacher-alignment}

\begin{figure}[t]
    \centering
    \includegraphics[width=1\linewidth]{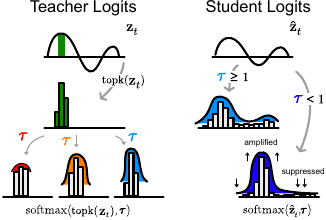}
    \caption{Illustration of distribution-aware temperature scaling for student networks.
    (\textbf{Left}) Soft label distribution of teacher after \textit{softmax}. \textit{Softmax} applies to Top-$k$ selected logits, and remaining logits assign a probability of 0. 
    (\textbf{Right}) Assigning students a `colder' temperature ($\tau < 1$) better aligns with the soft label distribution of the teacher. 
    }
    \label{fig:label-quantize}
    \vspace{-1em}
\end{figure}

Label quantization enables enhanced \textit{augmentation-per-image diversity} by reducing per-label storage, allowing more augmentation-label pairs under the same storage budget. Unlike label pruning, quantization fundamentally changes the teacher's prediction distribution $\mathcal{P}$. By applying softmax on quantized logits $\mathbf{z}_i^k$, probabilities are re-normalized within the top-$k$ selected classes, creating a sharper distribution with lower entropy than the unquantized distribution.

\textbf{Theoretical justification for $\hat{\tau} < 1$:} For the student to match the quantized teacher distribution when minimizing KL divergence, the following ratio equality must hold for any two classes $i, j$ in the top-$k$ set (see Appendix~\ref{appendix:kl-ratio-proof} for detailed proof):
\begin{equation}
\frac{\mathcal{\hat{P}}_i(\mathbf{\hat{z}}_i, \mathcal{F}_i, \hat{\tau})}{\mathcal{\hat{P}}_j(\mathbf{\hat{z}}_j, \mathcal{F}_j, \hat{\tau})} = \frac{\mathcal{P}_i(\mathbf{z}_i^k, \mathcal{F}_i, \tau_t)}{\mathcal{P}_j(\mathbf{z}_j^k, \mathcal{F}_j, \tau_t)}
\end{equation}

Expressing this in terms of softmax and taking logarithms:
\begin{equation}
\frac{\mathbf{\hat{z}}_i - \mathbf{\hat{z}}_j}{\hat{\tau}} = \log\left(\frac{\mathcal{P}_i(\mathbf{z}_i^k, \mathcal{F}_i, \tau_t)}{\mathcal{P}_j(\mathbf{z}_j^k, \mathcal{F}_j, \tau_t)}\right)
\end{equation}

Quantization produces a sharper distribution by concentrating probability mass on fewer classes. To match this sharper distribution under practical logit constraints, the temperature $\hat{\tau}$ must be below 1 to amplify existing logit differences. We determine the optimal temperature by minimizing KL divergence:
\begin{equation}
    \argmin_{\hat{\tau}^*} \sum_{i=1}^{n \in \mathcal{B}} \mathcal{P}_i(\mathbf{z}_i^k, \mathcal{F}_i, \tau_t) \log
    \Big(
    \frac{\mathcal{P}_i(\mathbf{z}_i^k, \mathcal{F}_i, \tau_t)}{\mathcal{\hat{P}}_i(\mathbf{\hat{z}}_i, \mathcal{F}_i, \hat{\tau})}
    \Big).
    \label{eq: optimization-problem}
\end{equation}

\begin{figure}[t]
    \centering
    \includegraphics[width=.9\linewidth]{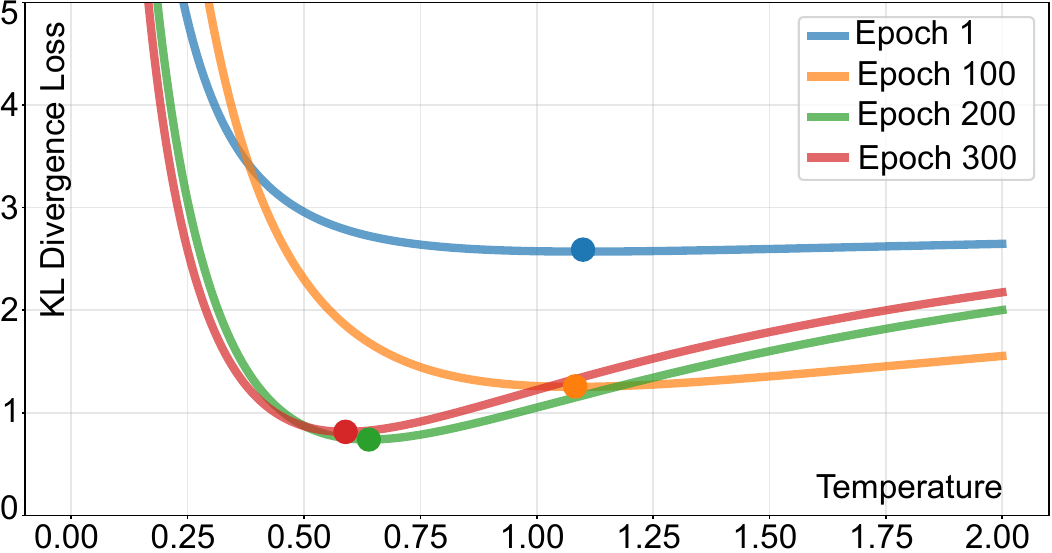}
    \caption{Loss landscape of KL divergence showing a clear minimum within $\tau \in [0,1]$.}
    \label{fig:loss-landscape}
\end{figure}

\textbf{Optimization via Grid Search.}
We solve this optimization using grid search over $\hat{\tau} \in [0,1]$ because: (1) computing gradients through nested softmax operations is expensive, and (2) the KL divergence landscape is well-behaved with respect to $\tau$ (Fig.~\ref{fig:loss-landscape}). We calculate the optimal temperature using the last batch of the $t$-th epoch and apply it to the $t+1$ epoch, as outlined in Algorithm~\ref{algo: student-temperature}.

\textbf{Computational Cost.}
The grid search incurs minimal overhead as it only recomputes softmax values and evaluates KL divergence under different temperatures. Importantly, it operates on a single batch (the last batch of each epoch) rather than the entire dataset, making the cost independent of dataset size. The total additional computational cost across all 300 training epochs is negligible (a few seconds in total).

\begin{algorithm}[t]
\caption{Dynamic Student Temperature via Grid Search}
\label{algo: student-temperature}
\begin{algorithmic}[1]
\STATE \textbf{Input:} Teacher logits $\mathcal{P}$, Student logits $\mathbf{\hat{z}}$
\STATE Initialize $\mathbb{T} \gets \{0.01, 0.02, \dots, 1.00\}$, $\mathcal{L}_{\mathrm{min}} \gets \infty$
\FOR{$\hat{\tau} \in \mathbb{T}$}
    \STATE $\mathcal{\hat{P}} \gets \mathrm{softmax}(\mathbf{\hat{z}} / \hat{\tau})$
    \STATE $\mathcal{L}_{\hat{\tau}} \gets \mathrm{KL}(\mathcal{P} \| \mathcal{\hat{P}})$
    \IF{$\mathcal{L}_{\hat{\tau}} < \mathcal{L}_{\mathrm{min}}$}
        \STATE $\mathcal{L}_{\mathrm{min}} \gets \mathcal{L}_{\hat{\tau}}$, $\hat{\tau}^* \gets \hat{\tau}$
    \ENDIF
\ENDFOR
\STATE \textbf{Output:} Optimal student temperature $\hat{\tau}^*$ for next epoch
\end{algorithmic}
\end{algorithm}
\section{Experiments}

\newcolumntype{C}[1]{>{\centering\arraybackslash}p{#1}}
\newlength{\lpqldwidth}
\setlength{\lpqldwidth}{1.1cm}  
\newlength{\normalwidth}
\setlength{\normalwidth}{0.54\lpqldwidth}  

\begin{table*}[t]
\centering
\footnotesize
\setlength{\tabcolsep}{0.20em}
\caption{ImageNet-1K label pruning result. Our method consistently shows a better performance under various pruning ratios. \colorbox{lightblue}{LPQLD} uses label pruning, and \colorbox{lightgreen}{LPQLD} uses both label pruning and label quantization. The base pruning rate for LPQLD is 10$\times$. $\{\mathbf{z}, \mathcal{F}\}$ denotes total storage of both augmentation information and soft labels. The validation model is ResNet-18. $^\dag$ denotes the reported results.}
\label{tab:label_pruning_main}
\begin{tabular}{l|C{\normalwidth}C{\normalwidth}C{\normalwidth}|C{\normalwidth}C{\normalwidth}C{\normalwidth}C{\lpqldwidth}|C{\normalwidth}C{\normalwidth}C{\normalwidth}C{\lpqldwidth}C{\lpqldwidth}|C{\normalwidth}C{\normalwidth}C{\normalwidth}C{\lpqldwidth}C{\lpqldwidth}|C{\lpqldwidth}|C{\lpqldwidth}}
    \toprule
    & \multicolumn{3}{c|}{$\sim$ 1$\times$}         & \multicolumn{4}{c|}{$\sim$ 10$\times$}        & \multicolumn{5}{c|}{$\sim$ 20$\times$}        & \multicolumn{5}{c|}{$\sim$ 40$\times$}       & \multicolumn{1}{c|}{$\sim$ 80$\times$} & \multicolumn{1}{c}{$\sim$ 200$\times$} \\
    {\scriptsize ResNet-18} & {\scriptsize SRe$^2$L} & {\scriptsize CDA} & {\scriptsize LPLD} & {\scriptsize SRe$^2$L} & {\scriptsize CDA} & {\scriptsize LPLD} & {\scriptsize LPQLD} & {\scriptsize SRe$^2$L} & {\scriptsize CDA} & {\scriptsize LPLD} & \multicolumn{2}{c|}{{\scriptsize LPQLD}} & {\scriptsize SRe$^2$L} & {\scriptsize CDA} & {\scriptsize LPLD} & \multicolumn{2}{c|}{{\scriptsize LPQLD}} & {\scriptsize LPQLD} & {\scriptsize LPQLD} \\ \midrule
    \multicolumn{19}{l}{\textbf{IPC10} (168 M)} \\ \midrule
    Acc (\%)     & 20.1     & 33.3 & 34.6 & 18.9     & 28.4 & 32.7 & \lpld \stdb{32.8}{0.2} & 16.0     & 21.9 & 28.6 & \lpld \stdb{30.2}{0.8} & \lpqld\stdb{32.2}{0.2} & 11.4     & 13.2 & 20.2 & \lpld\stdb{23.1}{0.5} & \lpqld\stdb{29.6}{0.2} & \lpqld\stdb{27.3}{0.6} & \lpqld\stdb{20.0}{0.3} \\
    $\{\mathbf{z},\mathcal{F}\}$ & \multicolumn{3}{c|}{5,862 M} & \multicolumn{3}{c}{580 M} & \lpld 570 M & \multicolumn{3}{c}{310 M} & \lpld 310 M & \lpqld 300 M & \multicolumn{3}{c}{145 M} & \lpld 145 M & \lpqld 129 M & \lpqld 74 M & \lpqld 29 M\\
    $\{\mathbf{z},\mathcal{F}\}_{\uparrow}$ & \multicolumn{3}{c|}{1$\times$} & \multicolumn{3}{c}{10$\times$} & \lpld 10$\times$ & \multicolumn{3}{c}{19$\times$} & \lpld 19$\times$ & \lpqld 20$\times$ & \multicolumn{3}{c}{40$\times$} & \lpld 40$\times$ & \lpqld 45$\times$ & \lpqld 79$\times$ & \lpqld 202$\times$ \\
    \midrule
    \multicolumn{19}{l}{\textbf{IPC20} (331 M)} \\ \midrule
    Acc (\%)     & 33.6     & 44.0 & 47.2 & 31.1     & 39.7 & 44.7 & \lpld \stdb{45.0}{0.1} & 29.2     & 34.1 & 41.0 & \lpld \stdb{42.3}{0.2} & \lpqld\stdb{43.2}{0.3} & 21.7     & 24.0 & 33.0 & \lpld\stdb{35.7}{0.4} & \lpqld\stdb{41.2}{0.2} & \lpqld\stdb{38.6}{0.2} & \lpqld\stdb{30.5}{0.5} \\
    $\{\mathbf{z},\mathcal{F}\}$ & \multicolumn{3}{c|}{11,722 M} & \multicolumn{3}{c}{1167 M} & \lpld 1140 M & \multicolumn{3}{c}{623 M} & \lpld 623 M & \lpqld 604 M & \multicolumn{3}{c}{292 M} & \lpld 292 M & \lpqld 259 M & \lpqld 150 M & \lpqld 59 M \\
    $\{\mathbf{z},\mathcal{F}\}_{\uparrow}$ & \multicolumn{3}{c|}{1$\times$} & \multicolumn{3}{c}{10$\times$} & \lpld 10$\times$ & \multicolumn{3}{c}{19$\times$} & \lpld 19$\times$ & \lpqld 19$\times$ & \multicolumn{3}{c}{40$\times$} & \lpld 40$\times$ & \lpqld 45$\times$ & \lpqld 78$\times$ & \lpqld 199$\times$ \\
    \midrule
    \multicolumn{19}{l}{\textbf{IPC50} (819 M)} \\ \midrule
    Acc (\%)     & 46.8$^\dag$     & 53.5$^\dag$ & 55.4 & 44.1     & 50.3 & 54.4 & \lpld \stdb{54.5}{0.2} & 41.5     & 46.1 & 51.8 & \lpld \stdb{52.8}{0.1} & \lpqld\stdb{53.1}{0.1} & 35.5     & 38.0 & 46.7  & \lpld\stdb{48.4}{0.0} & \lpqld\stdb{51.3}{0.1} & \lpqld\stdb{49.6}{0.1} & \lpqld\stdb{43.0}{0.1} \\
    $\{\mathbf{z},\mathcal{F}\}$ & \multicolumn{3}{c|}{29,302 M} & \multicolumn{3}{c}{2,928 M} & \lpld 2,850 M & \multicolumn{3}{c}{1,562 M} & \lpld 1,562 M & \lpqld 1,515 M & \multicolumn{3}{c}{733 M} & \lpld 733 M & \lpqld 649 M & \lpqld 375 M & \lpqld 147 M \\
    $\{\mathbf{z},\mathcal{F}\}_{\uparrow}$ & \multicolumn{3}{c|}{1$\times$} & \multicolumn{3}{c}{10$\times$} & \lpld 10$\times$ & \multicolumn{3}{c}{19$\times$} & \lpld 19$\times$ & \lpqld 19$\times$ & \multicolumn{3}{c}{40$\times$} & \lpld 40$\times$ & \lpqld 45$\times$ & \lpqld 78$\times$ & \lpqld 199$\times$ \\
    \midrule
    \multicolumn{19}{l}{\textbf{IPC100} (1,630 M)} \\ \midrule
    Acc (\%)    & 52.8$^\dag$     & 58.0$^\dag$ & 59.4 & 51.1     & 55.1 & 58.8 & \lpld \stdb{59.1}{0.1} & 49.5     & 53.3 & 57.4 & \lpld \stdb{58.0}{0.2} & \lpqld\stdb{57.5}{0.1} & 44.4     & 47.2 & 54.0  & \lpld\stdb{54.9}{0.1} & \lpqld\stdb{56.2}{0.1} & \lpqld\stdb{54.9}{0.2} & \lpqld\stdb{50.1}{0.4} \\
    $\{\mathbf{z},\mathcal{F}\}$ & \multicolumn{3}{c|}{58,606 M} & \multicolumn{3}{c}{5,871 M} & \lpld 5,700 M & \multicolumn{3}{c}{3,132 M} & \lpld 3,132 M & \lpqld 3,038 M & \multicolumn{3}{c}{1,468 M} & \lpld 1,468 M & \lpqld 1,301 M & \lpqld 752 M & \lpqld 295 M \\
    $\{\mathbf{z},\mathcal{F}\}_{\uparrow}$ & \multicolumn{3}{c|}{1$\times$} & \multicolumn{3}{c}{10$\times$} & \lpld 10$\times$ & \multicolumn{3}{c}{19$\times$} & \lpld 19$\times$ & \lpqld 19$\times$ & \multicolumn{3}{c}{40$\times$} & \lpld 40$\times$ & \lpqld 45$\times$ & \lpqld 78$\times$ & \lpqld 199$\times$ \\
    \midrule
    \multicolumn{19}{l}{\textbf{IPC200} (3,256 M)} \\ \midrule
    Acc (\%)    & 57.0$^\dag$     & \textbf{63.3}$^\dag$ & 62.6 & 56.5     & 59.4 & 62.4 & \lpld \stdb{62.4}{0.1} & 55.1     & 58.3 & 61.7 & \lpld \stdb{61.9}{0.0} & \lpqld\stdb{60.8}{0.2} & 51.9     & 54.4 & 59.6  & \lpld\stdb{60.1}{0.1} & \lpqld\stdb{59.9}{0.2} & \lpqld\stdb{58.8}{0.2} & \lpqld\stdb{55.4}{0.2} \\
    $\{\mathbf{z},\mathcal{F}\}$ & \multicolumn{3}{c|}{117,208 M} & \multicolumn{3}{c}{11,753 M} & \lpld 11,400 M & \multicolumn{3}{c}{6,269 M} & \lpld 6,269 M & \lpqld 6,082 M & \multicolumn{3}{c}{2,939 M} & \lpld 2,939 M & \lpqld 2,606 M & \lpqld 1,509 M & \lpqld 594 M \\
    $\{\mathbf{z},\mathcal{F}\}_{\uparrow}$ & \multicolumn{3}{c|}{1$\times$} & \multicolumn{3}{c}{10$\times$} & \lpld 10$\times$ & \multicolumn{3}{c}{19$\times$} & \lpld 19$\times$ & \lpqld 19$\times$ & \multicolumn{3}{c}{40$\times$} & \lpld 40$\times$ & \lpqld 45$\times$ & \lpqld 78$\times$ & \lpqld 197$\times$ \\
    \bottomrule
\end{tabular}
\end{table*}

\subsection{Experiment Settings}
Dataset details and training settings are provided in Appendix~\ref{appendix:dataset}.
Computing resources used for experiments can be found in Appendix~\ref{appendix:resources}.

\textbf{Dataset.}
Our experiment results are evaluated on ImageNet-1K~\cite{deng2009imagenet}, and ImageNet-21K-P~\cite{ridnik2021imagenet}.
We follow the data pre-processing procedure of SRe$^2$L~\cite{yin2023squeeze} and CDA~\cite{yin2023dataset}.

\textbf{Squeeze.}
We modify the pretrained model by adding class-wise BN running mean and running variance; since they are not involved in computing the BN statistics, they do not affect performance.
As mentioned in Sec.~\ref{sec:diverse-sample}, we compute class-wise BN statistics by training for one epoch with model parameters kept frozen.

\textbf{Recover.}
We perform data synthesis following Eq.~\ref{eq:ours}.
The batch size for the recovery phase is the same as the IPC.
Besides, we adhere to the original settings of SRe$^2$L while changing the pretrained teacher model to our squeezed models with class-wise BN statistics.

\textbf{Relabel.}
We use pretrained ResNet18~\cite{he2016deep} for all experiments as the relabel model except otherwise stated.
For Tiny-ImageNet and ImageNet-1K, we use the PyTorch pretrained model.
For ImageNet-21K-P, we use the timm pretrained model.
For DELT~\cite{shen2024deltsimplediversitydrivenearlylate}, due to the missing implementation of the relabeling process, we implement the Random Augmentation~\cite{cubuk2019randaugment} under Timm's~\cite{rw2019timm} implementation.

\textbf{Label Pruning Setting.}
For label pruning, we exclude the last batch (usually with an incomplete batch size) of each epoch from the label pool.
To not introduce any advantages from discarding the incomplete batch, we reduce the training of one batch from every epoch.
To attain a $p$\% pruning ratio, we generate and store the first $(1-p)$\% of total batches.
To implement label pruning effectively, we store additional image indices for every batch in a single file.
The additional storage is accounted for within the total auxiliary information $\{\mathcal{F}\}$.

\textbf{Label Quantization Setting.}
We store both indices and values of the Top-$k$ output logits.
During reconstruction of the full output logits, we set non-Top-$k$ values to zero.

\textbf{Validation.}
For LPLD and LPQLD, we adhere to the hyperparameter settings of CDA~\cite{yin2023dataset}.
Unless otherwise stated, we follow their original hyperparameter settings for all methods.

\begin{table*}
\centering
\caption{
Label pruning result on ImageNet-21K-P, using ResNet-18.
$\mathbb{I}$ denotes image storage.
$\{\mathbf{z}, \mathcal{F}\}$ denotes total storage of both augmentation information and soft labels.
Setting A, B, C denote using different base label pruning rates of 10$\times$, 20$\times$, and 40$\times$, respectively.
Best setting is highlighted in {\color{myblue2}\textbf{blue bold}} text.
$^\dag$ denotes reported results.
}
\label{tab:in21k}
\setlength{\tabcolsep}{0.22em}
\begin{tabular}{@{}lc|ccccc|ccc|ccc|c|cc|cc|cc@{}}
\toprule
      &              & \multicolumn{5}{c|}{1$\times$}        & \multicolumn{3}{c|}{10$\times$} & \multicolumn{3}{c|}{40$\times$} & Setting & \multicolumn{2}{c|}{$\sim$100$\times$} & \multicolumn{2}{c|}{$\sim$400$\times$} & \multicolumn{2}{c}{$\sim$500$\times$} \\
IPC   & $\mathbb{I}$ & $\{\mathbf{z}, \mathcal{F}\}$ & SRe$^2$L & CDA & D3S & LPLD & $\{\mathbf{z}, \mathcal{F}\}$       & LPLD  & LPQLD     & $\{\mathbf{z}, \mathcal{F}\}$       & LPLD & LPQLD & LPQLD & $\{\mathbf{z}, \mathcal{F}\}$       & LPQLD & $\{\mathbf{z}, \mathcal{F}\}$       & LPQLD  & $\{\mathbf{z}, \mathcal{F}\}$     & LPQLD  \\ \midrule
\multirow{3}{*}{IPC10} & \multirow{3}{*}{3G} & \multirow{3}{*}{643G} & \multirow{3}{*}{18.5$^\dag$} & \multirow{3}{*}{22.6$^\dag$} & \multirow{3}{*}{\textbf{26.9}$^\dag$} & \multirow{3}{*}{25.4} & \multirow{3}{*}{65G} & & & \multirow{3}{*}{16G} & & & 10$\times$ & 6.4G & \lpqld 23.5 & 1.6G & \lpqld 20.5 & 1.2G & \lpqld 18.9 \\
 &  &  &  &  &  &  &  & 24.1 & \lpld \textbf{28.2} &  & 21.3 & \lpld \textbf{25.6} & 20$\times$ & 6.6G & \lpqld \textbf{23.9} & 1.6G & \lpqld \textbf{20.9} & 0.8G & \lpqld \textbf{19.2} \\
 &  &  &  &  &  &  &  &  &  &  &  &  & 40$\times$ & \textbf{6.0G} & \lpqld 23.3 & \textbf{1.3G} & \lpqld 19.8 & \textbf{0.8G} & \lpqld 18.1 \\ \midrule
\multirow{3}{*}{IPC20} & \multirow{3}{*}{5G} & \multirow{3}{*}{1285G} & \multirow{3}{*}{20.5$^\dag$} & \multirow{3}{*}{26.4$^\dag$} & \multirow{3}{*}{28.5$^\dag$} & \multirow{3}{*}{\textbf{30.3}} & \multirow{3}{*}{129G} & & & \multirow{3}{*}{32G} & & & 10$\times$ & 13G & \lpqld 26.1 & 3.1G & \lpqld 23.0 & 2.3G & \lpqld 21.9 \\ 
 &  &  &  &  &  &  &  &  31.3 &  \lpld \textbf{34.4} &     & 29.4 & \lpld \textbf{33.8} & 20$\times$ & 14G & \lpqld 27.1 & 3.3G & \lpqld 23.9 & 1.7G & \lpqld 22.7 \\ 
 &  &  &  &  &  &  &  &  &  & &  &            & {\color{myblue2}\textbf{40$\times$}} & \textbf{12G} & \lpqld \textbf{27.7} & \textbf{2.7G} & \lpqld \textbf{24.1} & \textbf{1.6G} & \lpqld \textbf{23.3} \\ \bottomrule
\end{tabular}
\end{table*}

\subsection{Primary Results}

\textbf{ImageNet-1K.}
Table~\ref{tab:label_pruning_main} compares the ImageNet-1K pruning results with SOTA methods on ResNet18.
Our method outperforms other SOTA methods at various pruning ratios and different IPCs.
More importantly, our method consistently exceeds the unpruned version of SRe$^2$L with 78$\times$ less storage.
This may not appear impressive at first glance; however, in terms of actual storage, the size is reduced from 29.7G to 0.38G for IPC50.
In addition, we notice the performance at 10$\times$ (or 90\%) pruning ratio degrades slightly, especially for large IPCs.
For example, there is merely a $0.2\%$ performance degradation on IPC200 using ResNet18.
Based on this observation, we apply quantization at a 10$\times$ pruning rate, which means to achieve $\sim$80$\times$ storage reduction, we take the sweet spot of 10$\times$ from pruning, and reduce the output logits by 8$\times$.
Following this approach, we notice a huge improvement over solely applying pruning.
For instance, 7.2\% (27.3\% vs. 20.1\%) accuracy improvements on IPC10 at $79\times$ label compression rate.
The improvement is consistent and is more prominent at small IPCs.

\textbf{ImageNet-21K-P.}
ImageNet-21K-P has 10,450 classes, significantly increasing the disk storage as each soft label stores probabilities for 10,450 classes.
Compared to ImageNet-1K, ImageNet-21K has a larger label-to-image ratio due to a larger number of classes. 
The IPC20 dataset leads to a 1.2 TB (i.e., 1285 GiB) label storage, making the existing framework less practical.
However, with the help of our method, it can surpass SRe$^2$L~\cite{yin2023squeeze} using 500$\times$ less label storage.

In addition, exhibited in ImageNet-1K (Table~\ref{tab:label_pruning_main}), the Pareto fronts for optimal pruning and quantization ratio combinations change with the dataset size.
With a larger IPC, label pruning delivers an improved performance. 
Such a behavior is more distinguished in ImageNet-21K (Table~\ref{tab:in21k}), and we have set 3 different settings at high compression rates. 
For IPC20, using a base pruning rate of 40$\times$ outperforms smaller base pruning rates in terms of both performance and storage.

\begin{table}[t]
\setlength{\tabcolsep}{2.6pt} 
\centering
\caption{Storage breakdown of soft labels of a single batch of size 128. \texttt{Q} denotes label quantization and \texttt{P} denotes label pruning. Note that the bytes here include the overhead for PyTorch's serialization format. ImageNet-1K.}
\label{tab: label-storage-breakdown}
\begin{tabular}{l c c c c c c}
\toprule
Component                                 & Shape  & Bytes   & \% of Total      & Type    & Q          & P \\ 
\midrule
1.\ coordinates of crops                  & [128, 4]       & 2224    & 0.86\%  & $\mathcal{F}$    & $\times$     & $\checkmark$ \\
2.\ flip status                           & [128]          & 1328    & 0.51\%  & $\mathcal{F}$    & $\times$     & $\checkmark$ \\
3.\ index of cutmix images                & [128]          & 1328    & 0.51\%  & $\mathcal{F}$    & $\times$     & $\checkmark$ \\
4.\ strength of cutmix                    & [1]            & 880     & 0.34\%  & $\mathcal{F}$    & $\times$     & $\checkmark$ \\
5.\ cutmix bounding box                   & [4]            & 1904    & 0.73\%  & $\mathcal{F}$    & $\times$     & $\checkmark$ \\
6.\ prediction logits                     & [128, 1000]    & 257200  & \textbf{98.94}\% & $\mathbf{z}$    & $\checkmark$ & $\checkmark$ \\
\bottomrule
\end{tabular}
\end{table}

\textbf{Label Storage Breakdown.}
Table~\ref{tab: label-storage-breakdown} shows a storage gap between the theoretical and actual compression rates, highlighting the roles of label quantization and pruning. The theoretical rate only considers the output logits $\mathbf{\{z\}}$, while the actual rate must also include auxiliary data, particularly the augmentation information $\mathcal{\{F\}}$.
For label quantization, even though augmentation data occupies only up to 1.06\% of total storage, its impact grows with higher compression. For example, at IPC10, the theoretical 500$\times$ compression rate is reduced to an actual compression rate of 198$\times$, since $\mathcal{\{F\}}$ uses 20MB of a 30MB storage budget.
Label pruning removes both $\mathbf{\{z\}}$ and $\mathcal{\{F\}}$, but the actual compression rate is slightly lower than the theoretical rate since we store image indices of every batch, trading a small overhead for implementation flexibility.

\subsection{Ablation Study}

\begin{table}[t]
\centering
\caption{Ablation study of LPQLD. 
$\square$ denotes image generation components, $\triangle$ denotes soft label compression components.
\texttt{C}: Class-wise matching. 
\texttt{CS}: Class-wise supervision. 
\texttt{BP}: Batch-level pruning. 
\texttt{DKR}: Dynamic knowledge reuse. 
\texttt{Q}: Label quantization (Top-$k$). 
\texttt{CA}: Calibrated student-teacher alignment.
Results are on ResNet-18, ImageNet-1K, IPC50.}
\label{tab:ablation_unified}
\setlength{\tabcolsep}{3pt}
\renewcommand{\arraystretch}{1.05}
\begin{tabular}{@{}c|*{6}{@{\hspace{3pt}}c@{\hspace{3pt}}}|*{3}{@{\hspace{4pt}}l@{\hspace{4pt}}}|@{\hspace{5pt}}l@{}}
\toprule
 & \texttt{C} & \texttt{CS} & \texttt{BP} & \texttt{DKR} & \texttt{Q} & \texttt{CA} & 10$\times$ & 50$\times$ & 100$\times$ & Full \\ \midrule
$\square$ & -              & -              & -              & -              & -             & -              & 49.4 & 34.8 & 25.4 & 52.0 \\
$\square$ & $\checkmark$   & -              & -              & -              & -             & -              & \up{51.9}{2.5} & \up{37.7}{2.9} & \down{22.6}{2.8} & \up{54.7}{2.7} \\
$\square$ & $\checkmark$   & $\checkmark$   & -              & -              & -             & -              & \up{53.2}{3.8} & \up{39.7}{4.9} & \up{29.1}{3.7} & \up{55.4}{3.4} \\
$\triangle$ & $\checkmark$   & $\checkmark$   & $\checkmark$   & -              & -             & -              & \up{54.4}{5.0} & \up{43.1}{8.3} & \up{33.7}{8.3} & \up{55.4}{3.4} \\
$\triangle$ & $\checkmark$   & $\checkmark$   & $\checkmark$   & $\checkmark$   & -             & -              & \up{54.4}{5.0} & \up{47.8}{13.0} & \up{41.4}{16.0} & \up{55.4}{3.4} \\
$\triangle$ & $\checkmark$   & $\checkmark$   & $\checkmark$   & $\checkmark$   & $\checkmark$  & -              & - & \up{49.7}{14.9} & \up{47.9}{22.5} & \up{55.4}{3.4} \\
$\triangle$ & $\checkmark$   & $\checkmark$   & $\checkmark$   & $\checkmark$   & $\checkmark$  & $\checkmark$   & - & \up{51.3}{16.5} & \up{49.6}{24.2} & \up{55.4}{3.4} \\ \bottomrule
\end{tabular}
\end{table}
Table~\ref{tab:ablation_unified} presents a comprehensive ablation study that incrementally adds each component to isolate their individual effects and demonstrate their complementary benefits. We analyze the results across different compression rates (10$\times$, 50$\times$, 100$\times$) and full label settings. Rows are organized into two groups: \textbf{$\square$ Image Generation} (Rows 1-3) focuses on improving synthetic image diversity, while \textbf{$\triangle$ Label Compression} (Rows 4-7) addresses soft label storage efficiency.

$\square$ \textbf{Row 1 (Baseline):} SRe$^2$L implementation under CDA's hyperparameter settings. Label compression is achieved by random epoch-level label pruning.

$\square$ \textbf{Row 2 (+ Class-wise matching):} Re-batches images within the same class during image ``recover'' phase. This modification increases within-class diversity by exposing the model to varied intra-class patterns.

$\square$ \textbf{Row 3 (+ Class-wise supervision):} Replaces global BN statistics with class-wise BN statistics computed separately for each class in the ``squeeze'' phase, which are then used as supervision in the ``recover'' phase. Since Row 2 already uses class-wise matching, this provides more precise and reasonable supervision that aligns with the class-specific batching, leading to stronger diversity constraints and larger improvements at higher compression rates.

$\triangle$ \textbf{Row 4 (+ Batch-level pruning):} Switches from epoch-level to batch-level label pruning, enabling finer-grained control over which augmentation-label pairs to retain. This structured selection mechanism consistently improves performance across all compression rates.

$\triangle$ \textbf{Row 5 (+ Dynamic knowledge reuse):} Introduces temperature annealing that gradually decreases teacher temperature during training. This allows the model to extract maximum information from retained labels by starting with softer distributions (learning broad class relationships) and transitioning to sharper distributions (refining specific class boundaries). The gains are most pronounced at high compression rates where label scarcity makes efficient knowledge extraction critical.

$\triangle$ \textbf{Row 6 (+ Label quantization):} Applies Top-$k$ selection to compress soft labels by storing only the top-$k$ \textit{pre-softmax} logits. These stored logits can be later recomputed with different temperatures during training, enabling flexible temperature scheduling. Combined with label pruning, this achieves additional compression. Quantization is not applied at 10$\times$ since the base pruning already operates at this rate.

$\triangle$ \textbf{Row 7 (+ Calibrated alignment):} Adds grid-search optimization to dynamically adjust student temperature, aligning the student distribution with the sharper quantized teacher distribution. This calibration is essential for teacher label quantization, without it (as would occur in Row 5), temperature adjustment creates distribution mismatch. Here, it properly compensates for the information loss from quantization, enabling effective dual compression.

\subsection{Additional Analysis}
\label{sec: additional-analysis}

\begin{table}[t]
\footnotesize
\caption{LPQLD on different distilled datasets. 
$^\dag$ denotes the reported results. ResNet-18, ImageNet-1K.}
\label{tab: cross-method}
\setlength{\tabcolsep}{0.18em}
\begin{tabular}{@{}l|cccc|cccc|cccc@{}}
\toprule
    & \multicolumn{4}{c|}{DWA~\cite{dwa2024neurips}} & \multicolumn{4}{c|}{DELT~\cite{shen2024deltsimplediversitydrivenearlylate}} & \multicolumn{4}{c}{Minimax~\cite{gu2024efficient}} \\ 
$\mathbf{\{z\}}$    & 1$\times$ & 10$\times$    & 50$\times$    & 100$\times$   & 1$\times$  & 10$\times$        & 50$\times$        & 100$\times$   & 1$\times$     & 10$\times$    & 50$\times$        & 100$\times$     \\\midrule
10  &  37.9$^\dag$ & \lpld 33.4  & \lpqld 29.2   & \lpqld 26.4  & 45.8$^\dag$ &  \lpld 41.3 & \lpqld 35.6   & \lpqld 35.0   & 44.3$^\dag$  & \lpld 39.3      & \lpqld 37.7       & \lpqld 34.9       \\
50  &  55.2$^\dag$ & \lpld 51.7  & \lpqld 47.8   & \lpqld 46.0  & 59.2$^\dag$ &  \lpld 58.0 & \lpqld 55.5   & \lpqld 54.4   & 58.6$^\dag$  & \lpld 56.1      & \lpqld 55.0       & \lpqld 53.7       \\
100 &  59.2$^\dag$ & \lpld 57.3  & \lpqld 54.6   & \lpqld 52.9  & 62.4$^\dag$ &  \lpld 61.7 & \lpqld 58.6   & \lpqld 58.5   & -            & \lpld 59.0      & \lpqld 58.4       & \lpqld 57.4       \\ \bottomrule
\end{tabular}
\end{table}

\begin{table}[t]
\footnotesize
\centering
\caption{
Cross-architecture result. ResNet-18 is used as the teacher model for all student models.
IPC50, ImageNet-1K.
}
\label{tab: cross_arch}
\setlength{\tabcolsep}{0.56em}
\begin{tabular}{@{}lrc|cccc@{}}
\toprule
 Model           & Size \space\space        & Full Acc     & 1$\times$         & $\sim$30$\times$  & $\sim$40$\times$  & $\sim$80$\times$ \\ \midrule

 ResNet-18~\cite{he2016deep}       & 11.7M                 & 69.76 & 55.4 & \lpld 50.2 & \lpqld 51.3 & \lpqld 49.6 \\
 ResNet-50~\cite{he2016deep}       & 25.6M                 & 76.13 & 62.2 & \lpld 57.8 & \lpqld 56.7 & \lpqld 54.8 \\
 EfficientNet-B0~\cite{tan2019efficientnet} & 5.3M         & 77.69 & 55.5 & \lpld 53.2 & \lpqld 54.0 & \lpqld 52.1 \\
 MobileNet-V2~\cite{sandler2018mobilenetv2}    & 3.5M      & 71.88 & 49.1 & \lpld 47.8 & \lpqld 45.9 & \lpqld 45.8 \\
 Swin-V2-Tiny~\cite{liu2022swin}    & 28.4M                & 82.07 & 40.6 & \lpld 42.0 & \lpqld 44.6 & \lpqld 38.7 \\ \bottomrule
\end{tabular}
\end{table}

\begin{table}[t]
\centering
\caption{Comparison with prior label quantization methods, MS (Marginal Smoothing) and MR (Marginal Renorm)~\cite{shen2022fast}.
$^*$ denotes MS and MR having approximately the same storage.
ImageNet-1K, ResNet-18.}
\footnotesize
\label{tab:compare_quantization}
\setlength{\tabcolsep}{2.5pt}
\renewcommand{\arraystretch}{1.1}
\begin{tabular}{l|cccc|cccc}
\toprule
Method & \multicolumn{4}{c|}{IPC10} & \multicolumn{4}{c}{IPC50} \\
Top-$k$ & $k$=50 & $k$=25 & $k$=5 & $k$=1 & $k$=50 & $k$=25 & $k$=5 & $k$=1 \\ 
\midrule
MS~\cite{shen2022fast} & 16.5 & 12.4 & 7.5 & 4.0 & 38.8 & 33.6 & 19.5 & 7.7 \\[-0.5ex]
 & {\color{gray}$\scriptscriptstyle \pm 0.2$} & {\color{gray}$\scriptscriptstyle \pm 0.4$} & {\color{gray}$\scriptscriptstyle \pm 0.2$} & {\color{gray}$\scriptscriptstyle \pm 0.1$} & {\color{gray}$\scriptscriptstyle \pm 0.3$} & {\color{gray}$\scriptscriptstyle \pm 0.2$} & {\color{gray}$\scriptscriptstyle \pm 0.3$} & {\color{gray}$\scriptscriptstyle \pm 0.6$} \\
MR~\cite{shen2022fast} & 17.7 & 16.6 & 12.6 & 5.1 & 28.6 & 31.7 & 34.0 & 16.7 \\[-0.5ex]
 & {\color{gray}$\scriptscriptstyle \pm 0.4$} & {\color{gray}$\scriptscriptstyle \pm 0.2$} & {\color{gray}$\scriptscriptstyle \pm 0.4$} & {\color{gray}$\scriptscriptstyle \pm 0.1$} & {\color{gray}$\scriptscriptstyle \pm 0.2$} & {\color{gray}$\scriptscriptstyle \pm 0.2$} & {\color{gray}$\scriptscriptstyle \pm 0.2$} & {\color{gray}$\scriptscriptstyle \pm 0.2$} \\
\midrule
$\{\mathbf{z},\mathcal{F}\}^*$ & 753M & 478M & 203M & 203M & 3,768M & 2,393M & 1,020M & 1,020M \\
\midrule
\lpqld \textbf{Ours} & \multicolumn{4}{c|}{\lpqld \std{\textbf{29.6}}{0.2}} & \multicolumn{4}{c}{\lpqld \std{\textbf{51.3}}{0.1}} \\
\lpqld $\{\mathbf{z},\mathcal{F}\}$ & \multicolumn{4}{c|}{\lpqld \textbf{129M}} & \multicolumn{4}{c}{\lpqld \textbf{649M}} \\
\bottomrule
\end{tabular}
\end{table}

\textbf{Cross-Architecture Performance.}
Table~\ref{tab: cross_arch} demonstrates how our compression methods generalize across diverse network architectures, with ResNet-18 serving as the teacher model throughout. 
The results reveal consistent patterns across different model families, from convolutional networks (ResNet-18, ResNet-50) to efficient mobile architectures (MobileNet-V2, EfficientNet-B0) and transformer-based models (Swin-V2-Tiny).
At 10$\times$ compression using LPLD, all architectures maintain performance nearly equivalent to their uncompressed (1$\times$) counterparts, with minimal accuracy drops ranging from 0 to 2 percentage points.
More aggressive compression (30$\times$, 50$\times$, and 100$\times$) shows expected performance degradation, though the hybrid LPQLD method helps mitigate losses at higher compression rates.
Notably, the Swin-V2-Tiny transformer model shows unique behavior, actually \textbf{improving} by 2.2 percentage points at 10$\times$ compression and maintaining strong performance at 50$\times$ compression with LPQLD.
These cross-architecture results highlight our findings that when dataset diversity is sufficient, the need for extensive augmentations and soft labels can be significantly reduced regardless of the target architecture.

\textbf{Label Compression Performance on Different Distilled Datasets.}
Since our label pruning and quantization techniques are explicit and transferable, we evaluate the effectiveness across various distilled datasets in Table~\ref{tab: cross-method}. We apply our label pruning approach to three additional dataset distillation methods: the diversity-driven DWA~\cite{dwa2024neurips}, multisized-pattern DELT~\cite{shen2024deltsimplediversitydrivenearlylate}, and diffusion-based Minimax~\cite{gu2024efficient}. Results demonstrate strong compatibility with all methods, with particularly significant improvements observed for IPC settings.

\textbf{Comparison with Prior Label Quantization Methods.}
Table~\ref{tab:compare_quantization} compares LPQLD with FKD's Marginal Smoothing (MS) and Marginal Re-Norm (MR)~\cite{shen2022fast}. Both MS and MR suffer severe performance degradation as Top-$k$ decreases, with \textbf{Top-5 and Top-1 having identical storage} (203M for IPC10, 1,020M for IPC50) because auxiliary data dominates, representing the \textbf{fundamental limit of quantization-only approaches}. In contrast, LPQLD achieves superior performance (29.6\% for IPC10, 51.3\% for IPC50) with significantly lower storage (129M and 649M) by combining pruning and quantization, breaking through this bottleneck.

\begin{figure}[t]
    \centering
    \includegraphics[width=.65\linewidth]{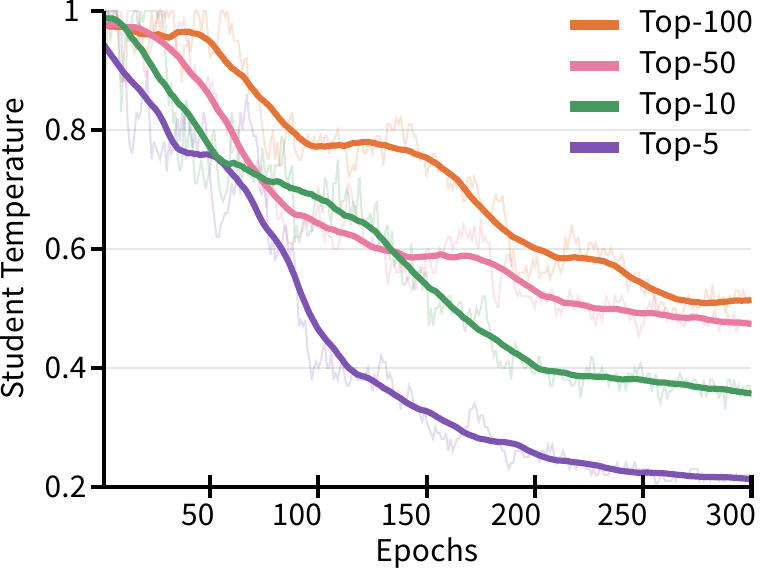}
    \caption{
    Optimal student temperature $\hat{\tau}^*$ over training epochs.
    }
    \label{fig: student-temperature}
\end{figure}

\begin{figure}[t]
    \centering
    \includegraphics[width=.75\linewidth]{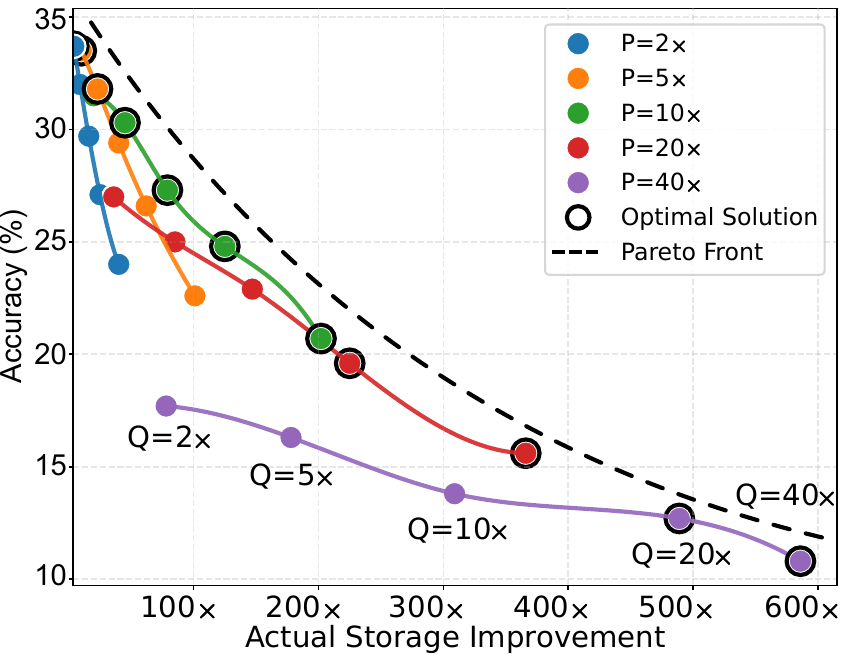}
    \caption{
    Pareto front of accuracy vs. storage for different label pruning (P) and label quantization (Q) combinations. IPC10, ResNet-18, ImageNet-1K.
    }
    \label{fig: pareto-front}
    \vspace{-1em}
\end{figure}

\textbf{Optimal Student Temperature.}
Fig.~\ref{fig: student-temperature} showcases when the student model is randomly initialized and predictions are not accurate, a relatively high temperature is assigned. When models can make more accurate predictions, the optimal temperature for students goes down to better align with the concentrated probability mass of the teacher distribution.
In addition, we observe that a higher quantization ratio (smaller Top-$k$) has a lower average student temperature,
which is consistent with our hypothesis that student prediction distribution should be scaled more to properly align with a more concentrated teacher distribution.

\textbf{Pareto Front Analysis in a Fixed IPC.}
From Fig.~\ref{fig: pareto-front}, we present two critical observations.
First, there exists a Pareto front where specific configurations simultaneously excel in both accuracy and storage efficiency compared to alternatives.
Second, even within a single IPC (i.e., IPC10), we observe significant configuration variations along the Pareto front.
Specifically, when actual storage improvements ($\{\mathbf{z}, \mathcal{F}\}$) remain below $50\times$, combining a $5\times$ label pruning rate with label quantization outperforms configurations using larger pruning ratios as the base compression rate. However, as the actual compression rate increases to $50\times$-$200\times$, configurations with a $10\times$ label pruning rate clearly dominate the Pareto front. Beyond $200\times$ compression, adopting a $20\times$ label pruning rate emerges as the optimal approach.

\textbf{Pareto Front Analysis in Different IPCs.}
Fig.~\ref{fig: pareto-ipc} demonstrates that \textbf{larger dataset sizes favor higher base compression rates at equivalent compression ratios}.
Given a fixed theoretical compression ratio $\{\mathbf{z}\}$, we observe that Pareto optimal solutions shift across dataset scales. At IPC10, configurations with a $10\times$ pruning rate (green dot) achieve higher performance despite lower actual storage improvement compared to configurations using a $20\times$ pruning rate as base. However, as the dataset scales from IPC10 to IPC200, this performance gap consistently diminishes. Eventually, at IPC200, the $20\times$ configuration (red dot) achieves dominance by maintaining comparable performance levels while delivering higher actual storage improvements.

\textbf{Visualization.}
Fig.~\ref{fig: viz-ours} visualizes our method on three classes.
More visualizations are provided in Appendix~\ref{appendix:viz}.

\begin{figure}[t]
    \centering
    \includegraphics[width=0.9\linewidth]{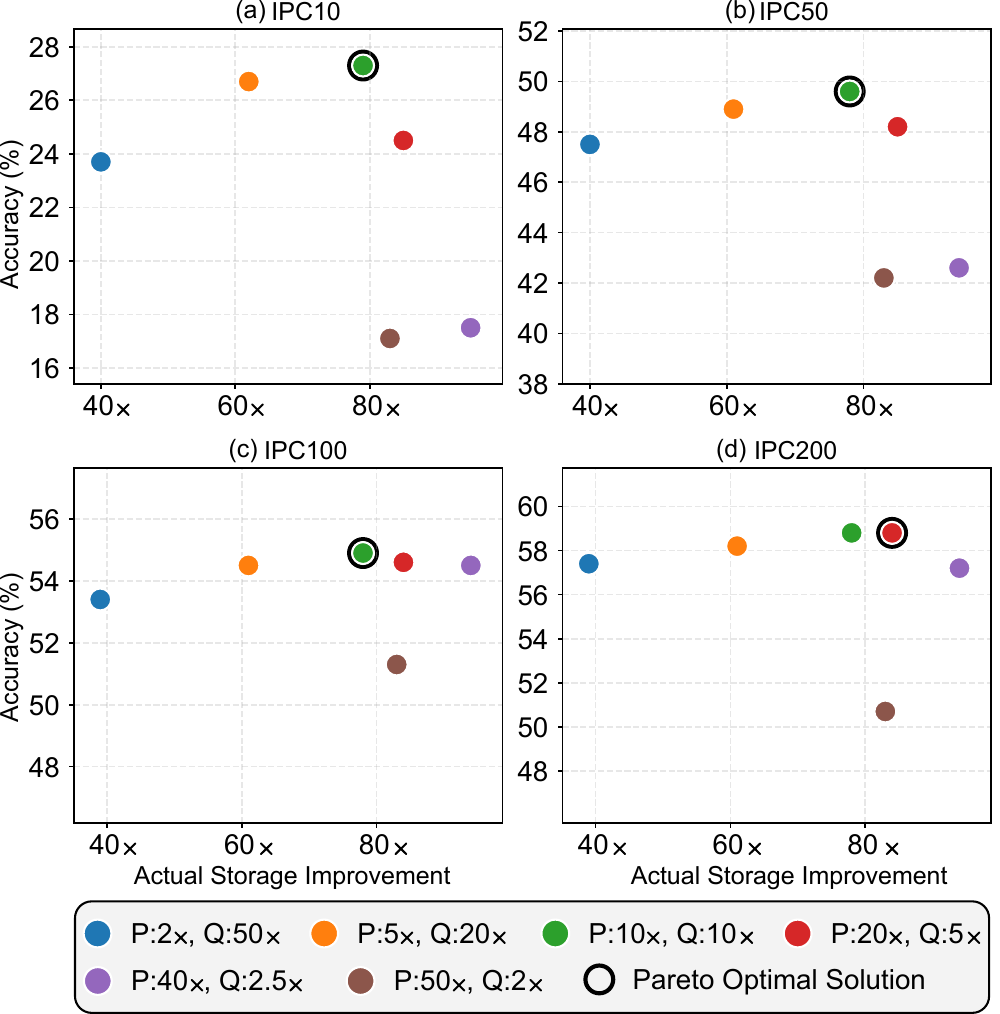}
    \caption{
    Illustration of Pareto front changes for different IPCs under the same theoretical compression rate (i.e., 100$\times$). \texttt{P:50x, Q:2x} denotes theoretically compression of label pruning is 50$\times$ and that of label quantization is 2$\times$.
    }
    \label{fig: pareto-ipc}
    \vspace{-1em}
\end{figure}

\section{Conclusion}

This work addressed the critical storage bottleneck in dataset distillation by developing Label Pruning and Quantization for Large-scale Distillation (LPQLD). Our analysis identified two fundamental issues: (1) \textit{insufficient image diversity}, where high within-class similarity requires extensive augmentation, and (2) \textit{insufficient supervision diversity}, where limited variety in supervisory signals leads to performance degradation at high compression rates. We systematically addressed both challenges by enhancing \textit{image diversity} via class-wise BN supervision during synthesis, and introducing Label Pruning with Dynamic Knowledge Reuse for \textit{label-per-augmentation diversity} and Label Quantization with Calibrated Student-Teacher Alignment for \textit{augmentation-per-image diversity}. Our extensive experiments demonstrate that LPQLD achieves 78$\times$ compression on ImageNet-1K and 500$\times$ on ImageNet-21K while maintaining or exceeding prior performance. By introducing the supervision diversity framework, we enable practical deployment in resource-constrained environments and pave the path for soft-label empowered dataset distillation.
Limitations and future work are in Appendix~\ref{appendix:limitation}. Ethical considerations are in Appendix~\ref{appendix:ethics}.

\normalem
\bibliographystyle{unsrt}
\bibliography{reference}

\clearpage
\onecolumn
\appendices

\section{Theoretical Motivation for Low Student Temperature}
\label{appendix:kl-ratio-proof}

\subsection{Optimal Logit-Temperature Relationship}

Let $\mathcal{P}$ be the quantized teacher distribution where $\mathcal{P}_i > 0$ for $i \in \mathcal{K}$ (top-$k$ indices) and $\mathcal{P}_i = 0$ otherwise, with $\sum_{i \in \mathcal{K}} \mathcal{P}_i = 1$. The student distribution $\mathcal{\hat{P}}$ is parameterized by logits $\mathbf{\hat{z}}$ and temperature $\hat{\tau}$.

\begin{theorem}
\label{thm:main}
Minimizing $D_{\mathrm{KL}}(\mathcal{P} \| \mathcal{\hat{P}})$ implies that for any pair of classes $i, j \in \mathcal{K}$, the optimal student logits $\mathbf{\hat{z}}^*$ satisfy:
\begin{equation}
\hat{z}_i^* - \hat{z}_j^* = \hat{\tau} \log \frac{\mathcal{P}_i}{\mathcal{P}_j}.
\label{eq:logit-relationship}
\end{equation}
\end{theorem}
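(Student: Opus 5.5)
The plan is to compute the KL objective explicitly as a function of the student logits $\mathbf{\hat{z}}$ at a fixed temperature $\hat{\tau}$. I will then split it into a part that depends only on the relative logits inside $\mathcal{K}$ and a part that measures the mass the student places outside $\mathcal{K}$. The stated relation will follow from minimizing the first part.

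Write $\mathcal{\hat{P}}_m = \exp(\hat{z}_m/\hat{\tau}) / \sum_{l=1}^{C}\exp(\hat{z}_l/\hat{\tau})$. Since $\mathcal{P}_m = 0$ for $m \notin \mathcal{K}$ and we use $0\log 0 = 0$, we have
\begin{equation*}
D_{\mathrm{KL}}(\mathcal{P}\|\mathcal{\hat{P}}) = \sum_{i\in\mathcal{K}} \mathcal{P}_i \log \mathcal{P}_i - \sum_{i\in\mathcal{K}} \mathcal{P}_i \log \mathcal{\hat{P}}_i .
\end{equation*}
Introduce the student mass on the top-$k$ set, $S = \sum_{i\in\mathcal{K}} \mathcal{\hat{P}}_i$, and the conditional distribution $\mathcal{Q}_i = \mathcal{\hat{P}}_i / S$ for $i \in \mathcal{K}$. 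Because $\sum_{i\in\mathcal{K}}\mathcal{P}_i = 1$, the objective decomposes exactly as
\begin{equation*}
D_{\mathrm{KL}}(\mathcal{P}\|\mathcal{\hat{P}}) = D_{\mathrm{KL}}(\mathcal{P}|_{\mathcal{K}} \,\|\, \mathcal{Q}) - \log S .
\end{equation*}
The key observation is that $\mathcal{Q}_i = \exp(\hat{z}_i/\hat{\tau}) / \sum_{l\in\mathcal{K}} \exp(\hat{z}_l/\hat{\tau})$ depends only on the logits indexed by $\mathcal{K}$. It is therefore unaffected by the non-top-$k$ logits and invariant to a common shift of the $\mathcal{K}$-logits.

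Both terms are nonnegative. The term $-\log S$ can be driven toward $0$ by sending the non-$\mathcal{K}$ logits to $-\infty$, and this leaves $\mathcal{Q}$ unchanged. So for any fixed choice of non-top-$k$ logits, the minimization over the $\mathcal{K}$-logits reduces to minimizing $D_{\mathrm{KL}}(\mathcal{P}|_{\mathcal{K}}\|\mathcal{Q})$. By Gibbs' inequality this term is zero exactly when $\mathcal{Q}_i = \mathcal{P}_i$ for all $i\in\mathcal{K}$. This target is attainable because every $\mathcal{P}_i > 0$: take $\hat{z}_i = \hat{\tau}\log\mathcal{P}_i + c$. As a cross-check, the stationarity condition gives the same answer. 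The derivative of the objective with respect to $\hat{z}_m$ is $(\mathcal{\hat{P}}_m - \mathcal{P}_m)/\hat{\tau}$, and the corresponding restricted computation with $\mathcal{Q}$ in place of $\mathcal{\hat{P}}$ shows $\mathcal{Q}=\mathcal{P}|_{\mathcal{K}}$ is the unique critical point up to a common shift.

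Finally, the equality $\mathcal{Q}_i/\mathcal{Q}_j = \mathcal{P}_i/\mathcal{P}_j$ reads $\exp((\hat{z}_i^*-\hat{z}_j^*)/\hat{\tau}) = \mathcal{P}_i/\mathcal{P}_j$. Taking logarithms gives Eq.~\eqref{eq:logit-relationship}.

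The main obstacle is that the global infimum of the KL is not attained with finite logits. Matching the zeros of $\mathcal{P}$ outside $\mathcal{K}$ would require $\hat{z}_m = -\infty$, so the naive first-order condition $\mathcal{\hat{P}}=\mathcal{P}$ has no solution. I plan to handle this with the decomposition above. I will interpret "optimal student logits" as those minimizing the objective over the $\mathcal{K}$-logits, or equivalently as the limit of any minimizing sequence. The ratio condition holds exactly for every such optimum and passes to the limit, since it depends only on the $\mathcal{K}$-logits.
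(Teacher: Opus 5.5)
Your argument is correct, and it takes a different and more careful route than the paper. The paper works directly with the first-order condition. It computes $\partial D_{\mathrm{KL}}/\partial \hat{z}_k = -(\mathcal{P}_k - \hat{\mathcal{P}}_k)/\hat{\tau}$, sets this to zero for $k \in \mathcal{K}$ to get $\hat{\mathcal{P}}_k = \mathcal{P}_k$, and concedes in one sentence that this cannot hold exactly because the student softmax runs over all $C$ classes. It then asserts that the ratios within $\mathcal{K}$ can still be matched and takes logarithms. That is shorter, but the stationarity system it writes down has no finite solution: the $\mathcal{K}$-conditions force $\sum_{k\in\mathcal{K}}\hat{\mathcal{P}}_k = 1$, and the derivative in any non-$\mathcal{K}$ logit is $\hat{\mathcal{P}}_m/\hat{\tau} > 0$. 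So the paper's step from $\hat{\mathcal{P}}_k = \mathcal{P}_k$ to ``ratios match'' is informal.

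Your decomposition $D_{\mathrm{KL}}(\mathcal{P}\|\hat{\mathcal{P}}) = D_{\mathrm{KL}}(\mathcal{P}|_{\mathcal{K}}\|\mathcal{Q}) - \log S$ is exactly what justifies that step. The ratio information lives entirely in $\mathcal{Q}$, and the unattainable part lives entirely in $S$. The infimum is $0$ and is not attained. Along any minimizing sequence, $D_{\mathrm{KL}}(\mathcal{P}|_{\mathcal{K}}\|\mathcal{Q}^{(n)}) \to 0$, so $\mathcal{Q}^{(n)} \to \mathcal{P}|_{\mathcal{K}}$. Since every $\mathcal{P}_i>0$, the $\mathcal{K}$-logit differences then converge to $\hat{\tau}\log(\mathcal{P}_i/\mathcal{P}_j)$.

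One statement needs tightening. It is not true that, for fixed non-$\mathcal{K}$ logits, minimizing over the $\mathcal{K}$-logits reduces to minimizing $D_{\mathrm{KL}}(\mathcal{P}|_{\mathcal{K}}\|\mathcal{Q})$. Nor do minimizers over the $\mathcal{K}$-logits exist. Write $S = A/(A+B)$ with $A = \sum_{i\in\mathcal{K}} e^{\hat{z}_i/\hat{\tau}}$; then $S$ depends on the common shift of the $\mathcal{K}$-logits. Raising that shift leaves $\mathcal{Q}$ unchanged but strictly decreases $-\log S$, so your first interpretation of ``optimal'' is vacuous.

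The clean fix is to use the coordinates $(\mathcal{Q}, A)$, in which the objective separates as $D_{\mathrm{KL}}(\mathcal{P}|_{\mathcal{K}}\|\mathcal{Q}) - \log\frac{A}{A+B}$. For every fixed $A$ and fixed non-$\mathcal{K}$ logits, the unique minimizer is then $\mathcal{Q} = \mathcal{P}|_{\mathcal{K}}$, and the relation holds exactly there. For genuine minimizing sequences the logits themselves diverge, and the relation holds only in the limit, not ``exactly for every such optimum''.
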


\begin{proof}
Since $\mathcal{P}_m = 0$ for $m \notin \mathcal{K}$, only terms with $i \in \mathcal{K}$ contribute to the KL objective. The gradient w.r.t.\ $\hat{z}_k$ is $\frac{\partial D_{\mathrm{KL}}}{\partial \hat{z}_k} = -\frac{1}{\hat{\tau}}(\mathcal{P}_k - \mathcal{\hat{P}}_k)$, and setting it to zero yields the optimality condition $\mathcal{\hat{P}}_k = \mathcal{P}_k$ for $k \in \mathcal{K}$. Note that since the student softmax operates over all $C$ classes, exact matching is unattainable; however, the probability ratios within $\mathcal{K}$ can be matched. Substituting $\mathcal{\hat{P}}_i \propto \exp(\hat{z}_i/\hat{\tau})$ into $\mathcal{\hat{P}}_i / \mathcal{\hat{P}}_j = \mathcal{P}_i / \mathcal{P}_j$ and taking the logarithm yields Eq.~\eqref{eq:logit-relationship}.
\end{proof}

\subsection{Temperature Constraints}

\begin{proposition}
\label{prop:temp-scaling}
Given a maximum feasible logit difference $\Delta z_{\max}$ imposed by numerical stability or regularization, matching a quantized distribution with probability ratio $r = \max_{i,j \in \mathcal{K}} (\mathcal{P}_i/\mathcal{P}_j)$ requires:
\begin{equation}
\hat{\tau} \leq \frac{\Delta z_{\max}}{\log r}.
\label{eq:temp-bound}
\end{equation}
\end{proposition}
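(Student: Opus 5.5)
The plan is to apply Theorem~\ref{thm:main} directly and then pick out the pair of classes attaining the ratio $r$. Theorem~\ref{thm:main} says that at the KL-optimal student logits, for every pair $i,j \in \mathcal{K}$,
\[
\hat{z}_i^* - \hat{z}_j^* = \hat{\tau}\log\frac{\mathcal{P}_i}{\mathcal{P}_j}.
\]
So matching the quantized teacher distribution forces a specific logit gap for each pair, and that gap grows linearly in $\hat{\tau}$. We would interpret ``matching'' as satisfying these ratio conditions on $\mathcal{K}$. This is the only attainable notion, since the student softmax runs over all $C$ classes and cannot put exactly zero mass outside $\mathcal{K}$.

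The key steps are as follows.
\begin{enumerate}
\item Let $(i^\star, j^\star) = \arg\max_{i,j\in\mathcal{K}} \mathcal{P}_i/\mathcal{P}_j$. This pair is the largest and the smallest teacher probabilities in $\mathcal{K}$. Because $\mathcal{P}_i>0$ on $\mathcal{K}$ and $\mathcal{K}$ is finite, the maximum exists and $r\ge 1$.
\item Apply the relation above to this pair to get $\hat{z}_{i^\star}^* - \hat{z}_{j^\star}^* = \hat{\tau}\log r$.
\item Impose the feasibility constraint. Every feasible logit vector satisfies $|\hat{z}_i - \hat{z}_j| \le \Delta z_{\max}$. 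Applied to the optimal logits, this gives $\hat{\tau}\log r \le \Delta z_{\max}$.
\item Divide by $\log r$ to obtain Eq.~\eqref{eq:temp-bound}.
\end{enumerate}

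The one delicate point is the degenerate case $r=1$, where $\log r = 0$. Then all top-$k$ probabilities are equal, the required gap is zero for any $\hat{\tau}$, and the bound holds vacuously if we read $\Delta z_{\max}/0$ as $+\infty$. We would state this convention explicitly and otherwise assume $r>1$.

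The main obstacle is interpretive rather than technical. We have to make precise that $\Delta z_{\max}$ bounds the difference between the student's \emph{own} logits, before they are divided by the temperature. We must also be clear that ``requires'' means any student achieving the ratio-matching condition of Theorem~\ref{thm:main} must obey the inequality. Once those two points are fixed, the argument is a one-line consequence of Theorem~\ref{thm:main}.

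Finally, we would add a remark linking the bound to the motivation for $\hat{\tau}<1$. Quantization renormalizes mass onto $\mathcal{K}$, and for top-$k$ logits evaluated at the small annealed temperatures used in practice, the ratio $r$ grows. Whenever $\log r > \Delta z_{\max}$, the bound forces $\hat{\tau}<1$. We would present this as a consequence of the bound rather than as part of the proof.
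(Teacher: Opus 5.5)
Your proposal is correct and follows the paper's proof: both apply Theorem~\ref{thm:main} to the pair attaining $r$ to get the required logit gap $\hat{\tau}\log r$, then impose $|\hat{z}_i^* - \hat{z}_j^*| \le \Delta z_{\max}$ and divide by $\log r$. Your explicit handling of the degenerate case $r=1$ (where $\log r = 0$) adds a bit of care that the paper's one-line argument leaves implicit.
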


\begin{proof}
From Eq.~\eqref{eq:logit-relationship}, matching probability ratio $r$ requires $|\hat{z}_i^* - \hat{z}_j^*| = \hat{\tau} \log r$. Under constraint $|\hat{z}_i^* - \hat{z}_j^*| \leq \Delta z_{\max}$, we obtain $\hat{\tau} \leq \Delta z_{\max} / \log r$.
\end{proof}

\begin{remark}
\textbf{Why $\hat{\tau} < 1$ helps:} Quantization (top-$k$ selection) produces a lower-entropy distribution by concentrating probability mass on fewer classes. To match such a sharp distribution, larger logit differences are required. Under the constraint $|\hat{z}_i^* - \hat{z}_j^*| \leq \Delta z_{\max}$, a lower temperature $\hat{\tau} < 1$ compensates by amplifying the effective logit differences. This theoretical insight justifies our adaptive grid search strategy (Algorithm~1), as validated in Fig.~10.
\end{remark}

\section{Experimental Setup}
\label{appendix:dataset}
\label{appendix:setting}
\label{appendix:compression}

\subsection{ImageNet-1K}

\textbf{Dataset.} ImageNet-1K~\cite{deng2009imagenet} (ILSVRC2012) contains 1,000 categories with 1,281,167 training images and 50,000 validation images. All images are resized to $224 \times 224$ pixels.

\textbf{Training Configuration.}
We employ a PyTorch pretrained ResNet-18~\cite{he2016deep} with Top-1 accuracy of 69.76\% for both recovery and relabeling phases.
Class-wise BN statistics are extracted through a modified training script (Table~\ref{tab:squeeze-in1k}).
Data synthesis follows Table~\ref{tab:recover-in1k}, based on CDA~\cite{yin2023dataset} with IPC-dependent batch size.
For relabeling and validation (Table~\ref{tab:validate-in1k}), we use temperature annealing: $\tau$ starts at 20 and decreases by $\times 0.7$ every 30 epochs until reaching 2.
\begin{table}[H]
\centering
\begin{minipage}{\appendixnarrowtablewidth}
\centering
\captionsetup{width=\linewidth}
\caption{Squeezing configurations of ImageNet-1K.}
\label{tab:squeeze-in1k}
\begin{tabularx}{\linewidth}{@{}lX@{}}
\toprule
Info          & Detail                                           \\ \midrule
Total Images  & 1,281,167                                        \\
Batch Size    & 256                                              \\
BN Updates    & 5005                                             \\
Source        & \href{https://github.com/pytorch/vision/tree/main/references/classification}{https://github.com/pytorch/vision/references/classification}       \\ \bottomrule
\end{tabularx}
\end{minipage}
\end{table}

\begin{table}[H]
\centering
\begin{minipage}{\appendixtablewidth}
\centering
\captionsetup{width=\linewidth}
\caption{Data Synthesis of ImageNet-1K.}
\label{tab:recover-in1k}
\begin{tabularx}{\linewidth}{@{}XXX@{}}
\toprule
Config             & Detail                         \\ \midrule
Iteration          & 4,000         & -                              \\
Optimizer          & Adam          & $\beta_1,\beta_2 = (0.5, 0.9)$ \\
Image LR           & 0.25          & -                              \\
Batch Size         & IPC-dependent & e.g., 50 for IPC50             \\
Initialization     & Random        & -                              \\
BN Loss $(\alpha)$ & 0.01          & -                              \\ \bottomrule
\end{tabularx}
\end{minipage}
\end{table}

\begin{table}[H]
\centering
\begin{minipage}{\appendixtablewidth}
\centering
\captionsetup{width=\linewidth}
\caption{Relabel and Validation of ImageNet-1K. $\tau$ denotes temperature.}
\label{tab:validate-in1k}
\begin{tabularx}{\linewidth}{@{}XXX@{}}
\toprule
Config                        & Value                & Detail                    \\ \midrule
Epochs                        & 300                  & -                         \\
Optimizer                     & AdamW                & -                         \\
Model LR                      & 0.001                & -                         \\
Batch Size                    & 128                  & -                         \\
LR Scheduler                  & CosineAnnealing      & -                         \\
EMA Rate                      & Not Used             & -                         \\
\multirow{3}{*}{Augmentation} & RandomResizedCrop    & scale ratio = (0.08, 1.0) \\
                              & RandomHorizontalFlip & probability = 0.5         \\
                              & CutMix               & -                         \\ \midrule
Initial $\tau$                & 20                   & -                         \\
Final $\tau$                  & 2                    & clip at 2                 \\
$\tau$ Scheduler              & Step                 & $\times 0.7$ every 30 epochs   \\
\bottomrule
\end{tabularx}
\end{minipage}
\end{table}

\textbf{Compression Settings.}
Table~\ref{tab:lpqld_in1k} provides a detailed breakdown of our LPQLD results on ImageNet-1K, showing how different quantization settings at a fixed base pruning rate affect model performance across IPC values.
\begin{table}[H]
\centering
\begin{minipage}{\appendixwidetablewidth}
\centering
\captionsetup{width=\linewidth}
\caption{
Detailed LPQLD results on ImageNet-1K with quantization applied to pruned labels.
Only configurations with both pruning and quantization.
The base pruning rate is fixed at 10$\times$ for all configurations, with varying Top-K quantization settings.
}
\label{tab:lpqld_in1k}
\newcolumntype{C}{>{\centering\arraybackslash}X}
\begin{tabularx}{\linewidth}{@{}lCCCC@{}}
\toprule
IPC & \makecell{Base\\Pruning Rate} & \makecell{Quantization\\Top-K} & \makecell{Actual\\Compression} & Accuracy (\%) \\ \midrule
\multirow{4}{*}{IPC10} & \multirow{4}{*}{10$\times$} & Top-250 & 20$\times$ & 32.2 \\ 
 &  & Top-100 & 40$\times$ & 29.6 \\
 &  & Top-50 & 79$\times$ & 27.3 \\
 &  & Top-10 & 202$\times$ & 20.0 \\ \midrule
\multirow{4}{*}{IPC20} & \multirow{4}{*}{10$\times$} & Top-250 & 19$\times$ & 43.2 \\
 &  & Top-100 & 45$\times$ & 41.2 \\
 &  & Top-50 & 78$\times$ & 38.6 \\
 &  & Top-10 & 199$\times$ & 30.5 \\ \midrule
\multirow{4}{*}{IPC50} & \multirow{4}{*}{10$\times$} & Top-250 & 19$\times$ & 53.1 \\
 &  & Top-100 & 45$\times$ & 51.3 \\
 &  & Top-50 & 78$\times$ & 49.6 \\
 &  & Top-10 & 199$\times$ & 43.0 \\ \midrule
\multirow{4}{*}{IPC100} & \multirow{4}{*}{10$\times$} & Top-250 & 19$\times$ & 57.5 \\
 &  & Top-100 & 45$\times$ & 56.2 \\
 &  & Top-50 & 78$\times$ & 54.9 \\
 &  & Top-10 & 199$\times$ & 50.1 \\ \midrule
\multirow{4}{*}{IPC200} & \multirow{4}{*}{10$\times$} & Top-250 & 19$\times$ & 60.8 \\
 &  & Top-100 & 45$\times$ & 59.9 \\
 &  & Top-50 & 78$\times$ & 58.8 \\
 &  & Top-10 & 197$\times$ & 55.4 \\ \bottomrule
\end{tabularx}
\end{minipage}
\end{table}

\subsection{ImageNet-21K-P}

\textbf{Dataset.} ImageNet-21K-P~\cite{ridnik2021imagenet} is a pruned version of ImageNet-21K with 10,450 categories and 11,060,223 training images. Images are resized to $224 \times 224$ pixels.

\textbf{Training Configuration.}
Following CDA~\cite{yin2023dataset}, we use ResNet-18 trained for 80 epochs initialized with ImageNet-1K weights~\cite{ridnik2021imagenet}, achieving 38.1\% Top-1 accuracy.
Class-wise BN statistics are computed per Table~\ref{tab:squeeze-in21k}.
Data synthesis and relabeling/validation follow Tables~\ref{tab:recover-in21k} and \ref{tab:validate-in21k}.
We use the same temperature annealing as ImageNet-1K ($\tau$: $20 \to 2$).
CutMix is replaced with CutOut~\cite{devries2017improved}, and label smoothing of 0.2 is applied.
\begin{table}[H]
\centering
\begin{minipage}{\appendixnarrowtablewidth}
\centering
\captionsetup{width=\linewidth}
\caption{Squeezing configurations of Imagenet-21K-P.}
\label{tab:squeeze-in21k}
\begin{tabularx}{\linewidth}{@{}lX@{}}
\toprule
Info          & Detail                                           \\ \midrule
Total Images  & 11,060,223                                       \\
Batch Size    & 1,024                                            \\
BN Updates    & 10,801                                           \\
Source        & \href{https://github.com/Alibaba-MIIL/ImageNet21K}{https://github.com/Alibaba-MIIL/ImageNet21K}       \\ \bottomrule
\end{tabularx}
\end{minipage}
\vspace{-1em}
\end{table}

\begin{table}[H]
\centering
\begin{minipage}{\appendixtablewidth}
\centering
\captionsetup{width=\linewidth}
\caption{Data Synthesis of ImageNet-21K-P.}
\label{tab:recover-in21k}
\begin{tabularx}{\linewidth}{@{}XXX@{}}
\toprule
Config             & Value         & Detail                         \\ \midrule
Iteration          & 2,000         & -                              \\
Optimizer          & Adam          & $\beta_1,\beta_2 = (0.5, 0.9)$ \\
Image LR           & 0.05          & -                              \\
Batch Size         & IPC-dependent & e.g., 20 for IPC20             \\
Initialization     & Random        & -                              \\
BN Loss $(\alpha)$ & 0.25          & -                              \\ \bottomrule
\end{tabularx}
\end{minipage}
\end{table}

\begin{table}[H]
\centering
\begin{minipage}{\appendixtablewidth}
\centering
\captionsetup{width=\linewidth}
\caption{Relabel and Validation of ImageNet-21K-P. $\tau$ denotes temperature.}
\label{tab:validate-in21k}
\begin{tabularx}{\linewidth}{@{}XXX@{}}
\toprule
Config                                            & Value             & Detail                    \\ \midrule
Epochs                                            & 300               & -                         \\
Optimizer                                         & AdamW             & decay $=0.01$             \\
Model LR                                          & 0.002             & -                         \\
Batch Size                                        & 32                & -                         \\
LR Scheduler                                      & CosineAnnealing   & -                         \\
Label Smoothing                                   & 0.2               & -                         \\
EMA Rate                                          & Not Used          & -                         \\
\multirow{2}{*}{Augmentation} & RandomResizedCrop & scale ratio = (0.08, 1.0) \\
                              & CutOut            & -                         \\ \midrule
Initial $\tau$                                    & 20                & -                         \\
Final $\tau$                                      & 2                 & clip at 2                 \\
$\tau$ Scheduler                                  & Step              & $\times 0.7$ every 30 epochs   \\
\bottomrule
\end{tabularx}
\end{minipage}
\end{table}

\textbf{Compression Settings.}
Table~\ref{tab:lpqld_in21k} presents the detailed results for ImageNet-21K-P with various combinations of pruning rates and quantization top-K values.
\begin{table}[H]
\centering
\begin{minipage}{\appendixwidetablewidth}
\centering
\captionsetup{width=\linewidth}
\caption{
Detailed LPQLD results on ImageNet-21K-P with quantization applied to pruned labels.
Each row represents a specific combination of pruning rate and quantization Top-K value.
Best results within each storage tier are highlighted in \textbf{bold}.
The base pruning rate varies (10$\times$, 20$\times$, 40$\times$) with different Top-K quantization settings.
}
\label{tab:lpqld_in21k}
\newcolumntype{C}{>{\centering\arraybackslash}X}
\begin{tabularx}{\linewidth}{@{}lCCCC@{}}
\toprule
IPC & \makecell{Pruning\\Rate} & \makecell{Quantization\\Top-K} & \makecell{Actual\\Compression} & Accuracy (\%) \\ \midrule
\multirow{9}{*}{IPC10} & \multirow{3}{*}{10$\times$} & Top-500 & 100$\times$ & 23.5 \\
 &  & Top-100 & 402$\times$ & 20.5 \\
 &  & Top-50 & 536$\times$ & 18.9 \\ \cmidrule(l){2-5}
 & \multirow{3}{*}{20$\times$} & Top-1000 & 97$\times$ & \textbf{23.9} \\
 &  & Top-200 & 402$\times$ & \textbf{20.9} \\
 &  & Top-100 & 775$\times$ & \textbf{19.2} \\ \cmidrule(l){2-5}
 & \multirow{3}{*}{40$\times$} & Top-2000 & 107$\times$ & 23.3 \\
 &  & Top-400 & 495$\times$ & 19.8 \\
 &  & Top-200 & 835$\times$ & 18.1 \\ \midrule
\multirow{9}{*}{IPC20} & \multirow{3}{*}{10$\times$} & Top-500 & 99$\times$ & 26.1 \\
 &  & Top-100 & 414$\times$ & 23.0 \\
 &  & Top-50 & 559$\times$ & 21.9 \\ \cmidrule(l){2-5}
 & \multirow{3}{*}{20$\times$} & Top-1000 & 92$\times$ & 27.1 \\
 &  & Top-200 & 389$\times$ & 23.9 \\
 &  & Top-100 & 756$\times$ & 22.7 \\ \cmidrule(l){2-5}
 & \multirow{3}{*}{40$\times$} & Top-2000 & 107$\times$ & \textbf{27.7} \\
 &  & Top-400 & 476$\times$ & \textbf{24.1} \\
 &  & Top-200 & 803$\times$ & \textbf{23.3} \\ \bottomrule
\end{tabularx}
\end{minipage}
\end{table}

\section{Additional Information}
\subsection{Class-wise Statistics Storage}
\label{appendix:storage-class-stats}
\begin{table}[H]
\centering
\begin{minipage}{\appendixnarrowtablewidth}
\centering
\captionsetup{width=\linewidth}
\caption{
Storage required for class-wise statistics.
The model is ResNet-18, and storage is measured in MB.
}
\label{tab:storage-class-stats}
\newcolumntype{C}{>{\centering\arraybackslash}X}
\begin{tabularx}{\linewidth}{@{}lCCC@{}}
\toprule
              & ImageNet-1K & ImageNet-21K-P \\ \midrule
Original      & 44.66       & 247.20         \\
+ Class Stats & 81.30       & 445.87         \\ \midrule
Diff.         & 36.64       & 198.67         \\ \bottomrule
\end{tabularx}
\end{minipage}
\end{table}

The additional storage allocation for class-specific statistics is detailed in Table~\ref{tab:storage-class-stats}.
It is observed that this storage requirement escalates with an increase in the number of classes. However, this is a one-time necessity during the recovery phase and becomes redundant once the synthetic data generation is completed.

\subsection{Computing Resources}
\label{appendix:resources}
Experiments are performed on 4 NVIDIA A100 80G GPU cards, with a few experiments performed on RTX 3090 GPU cards.

\section{Limitation and Future Work}
\label{appendix:limitation}
While our method significantly reduces the storage requirements for dataset distillation, we acknowledge several limitations that provide opportunities for future research. First, although we achieve substantial label compression, the training time remains unchanged as the total number of training epochs required for optimal performance is not reduced. Second, the grid search process for finding the optimal student temperature in our calibrated alignment approach introduces additional computational overhead during training, though this cost is negligible compared to the benefits of storage reduction.

Future work could focus on developing methods that simultaneously reduce both storage requirements and training time, perhaps through more efficient knowledge transfer mechanisms or adaptive training schedules. Additionally, extending our approach to other knowledge distillation scenarios beyond dataset distillation (e.g., dataset pruning~\cite{xiao2025rethinkdc, ben2024distilling}) could broaden its applicability. Finally, exploring different temperature schedules tailored to specific architectures or datasets may further improve performance.

\section{Ethics Statement and Broader Impacts}
\label{appendix:ethics}
Our research study focuses on dataset distillation, which aims to preserve data privacy and reduce computing costs by generating small synthetic datasets that have no direct connection to real datasets. 
However, this approach does not usually generate datasets with the same level of accuracy as the full datasets.

In addition, our work in reducing the size of soft labels and enhancing image diversity can have a positive impact on the field by making large-scale dataset distillation more efficient, thereby reducing storage and computational requirements.
This efficiency can facilitate broader access to advanced machine learning techniques, potentially fostering innovation across diverse sectors.

\clearpage

\section{Visualization}
\label{appendix:viz}

In this section, we present visualizations of the datasets used in our experiments.
Due to the different matching objectives, datasets of different IPCs should have distinct images.
Therefore, we provide visualizations for different IPCs.
Figure~\ref{fig:imagenet1k} shows randomly sampled images from ImageNet-1K at various IPC.
Figure~\ref{fig:imagenet21k} provides visualizations of ImageNet-21K-P at IPC10 and IPC20.

\subsection{ImageNet-1K}
\begin{figure}[H]
    \centering
    \begin{minipage}[t]{0.48\columnwidth}
        \centering
        \includegraphics[width=\textwidth]{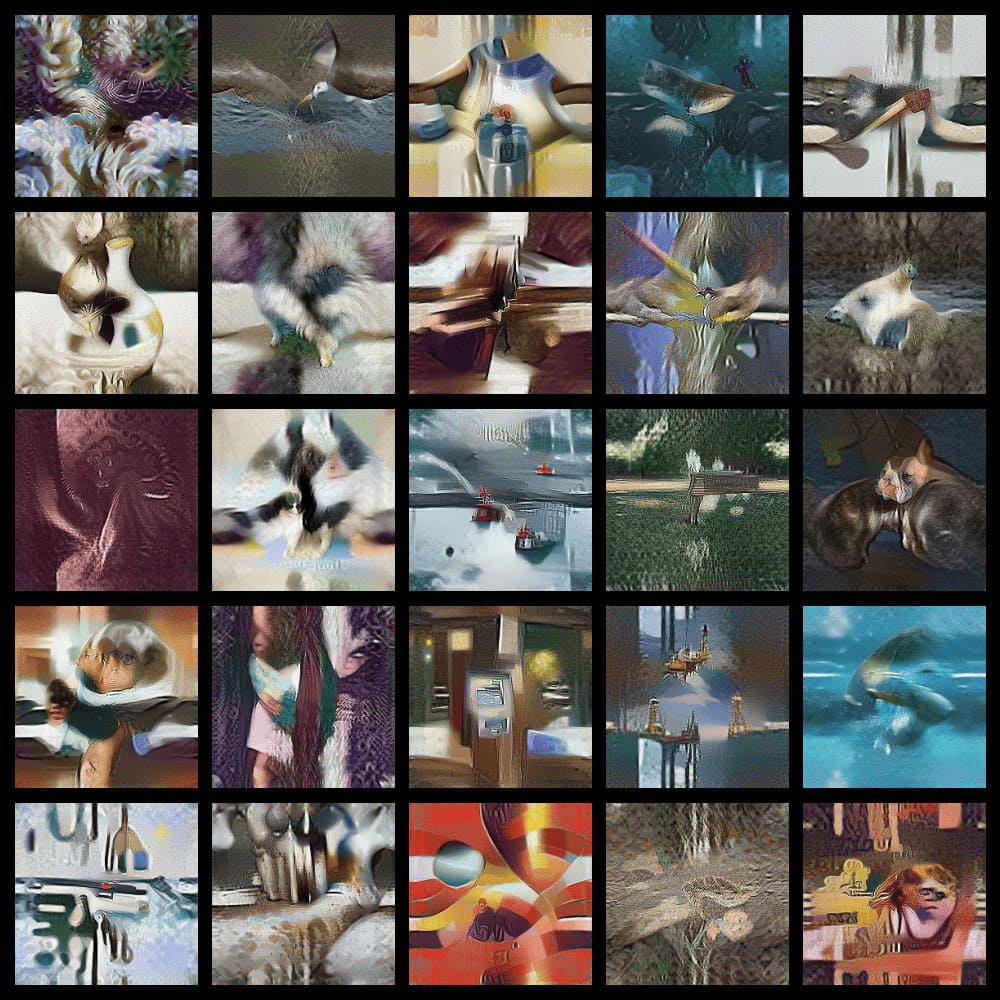}
        \caption{IPC10}
        \label{fig:in1k_ipc10}
    \end{minipage}%
    \hfill
    \begin{minipage}[t]{0.48\columnwidth}
        \centering
        \includegraphics[width=\textwidth]{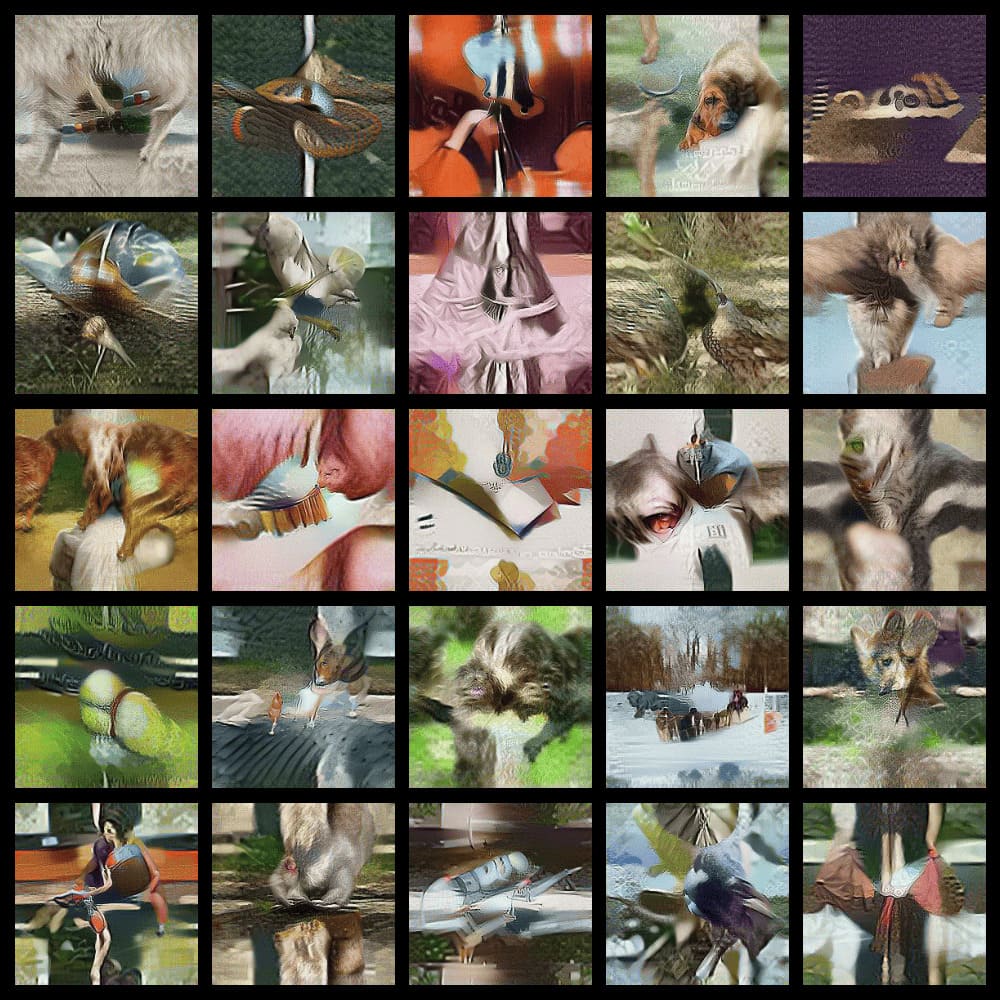}
        \caption{IPC50}
        \label{fig:in1k_ipc50}
    \end{minipage}
    
    \vspace{1cm}
    
    \begin{minipage}[t]{0.48\columnwidth}
        \centering
        \includegraphics[width=\textwidth]{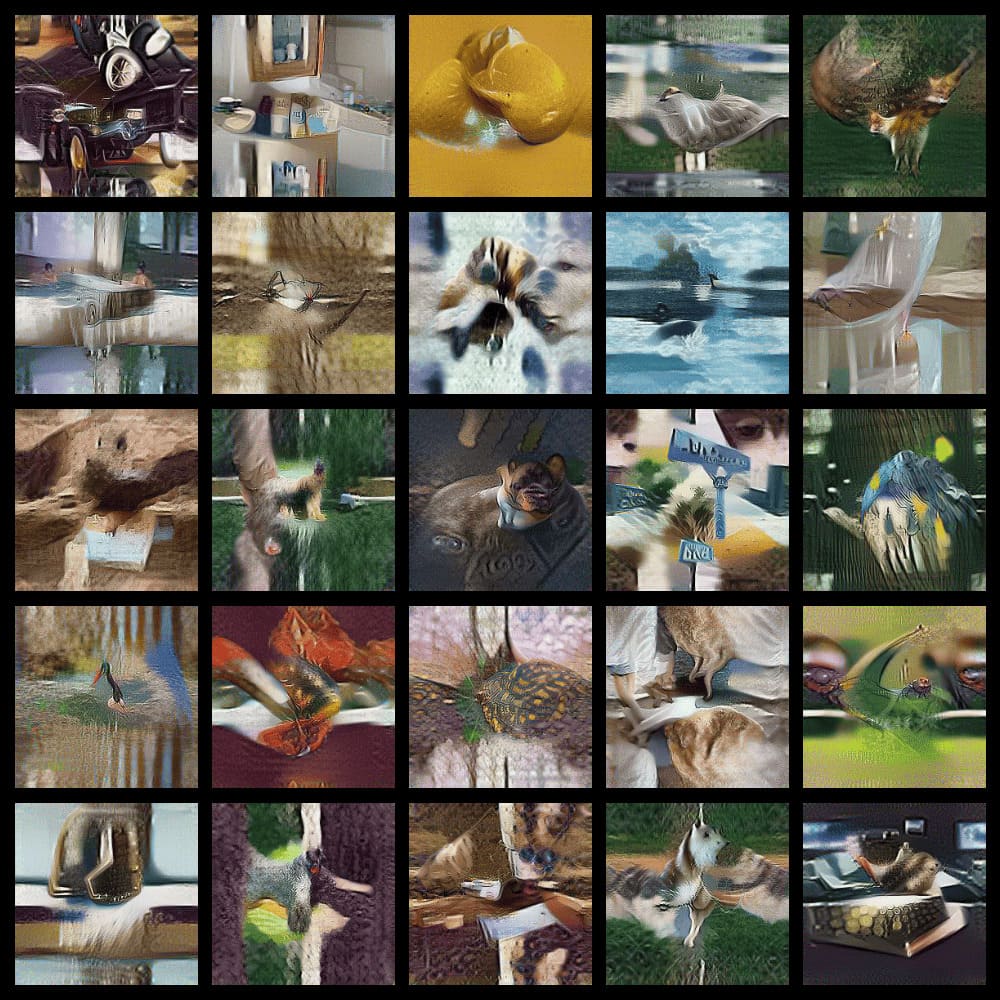}
        \caption{IPC100}
        \label{fig:in1k_ipc100}
    \end{minipage}%
    \hfill
    \begin{minipage}[t]{0.48\columnwidth}
        \centering
        \includegraphics[width=\textwidth]{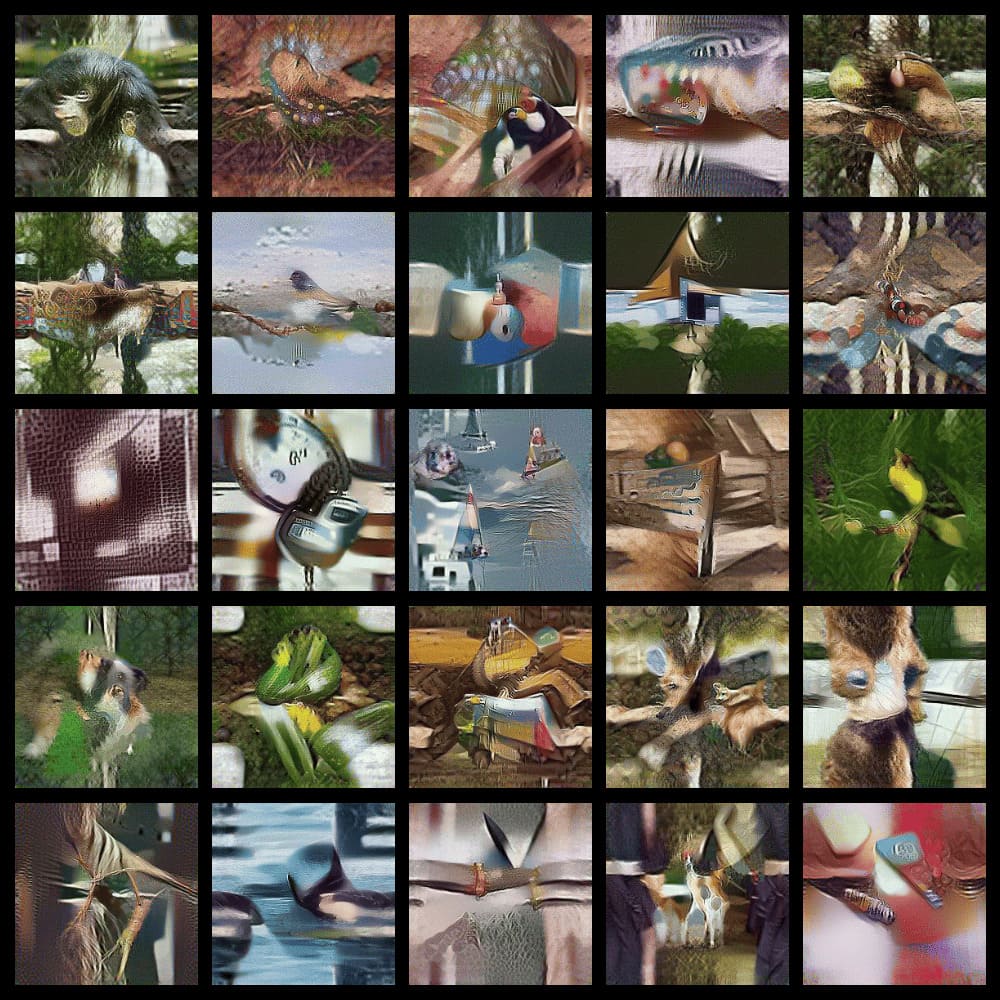}
        \caption{IPC200}
        \label{fig:in1k_ipc200}
    \end{minipage}
    \caption{Visualization of ImageNet-1K. Images are randomly sampled.}
    \label{fig:imagenet1k}
\end{figure}

\subsection{ImageNet-21K-P}
\begin{figure}[H]
    \centering
    \begin{minipage}[t]{0.48\columnwidth}
        \centering
        \includegraphics[width=\textwidth]{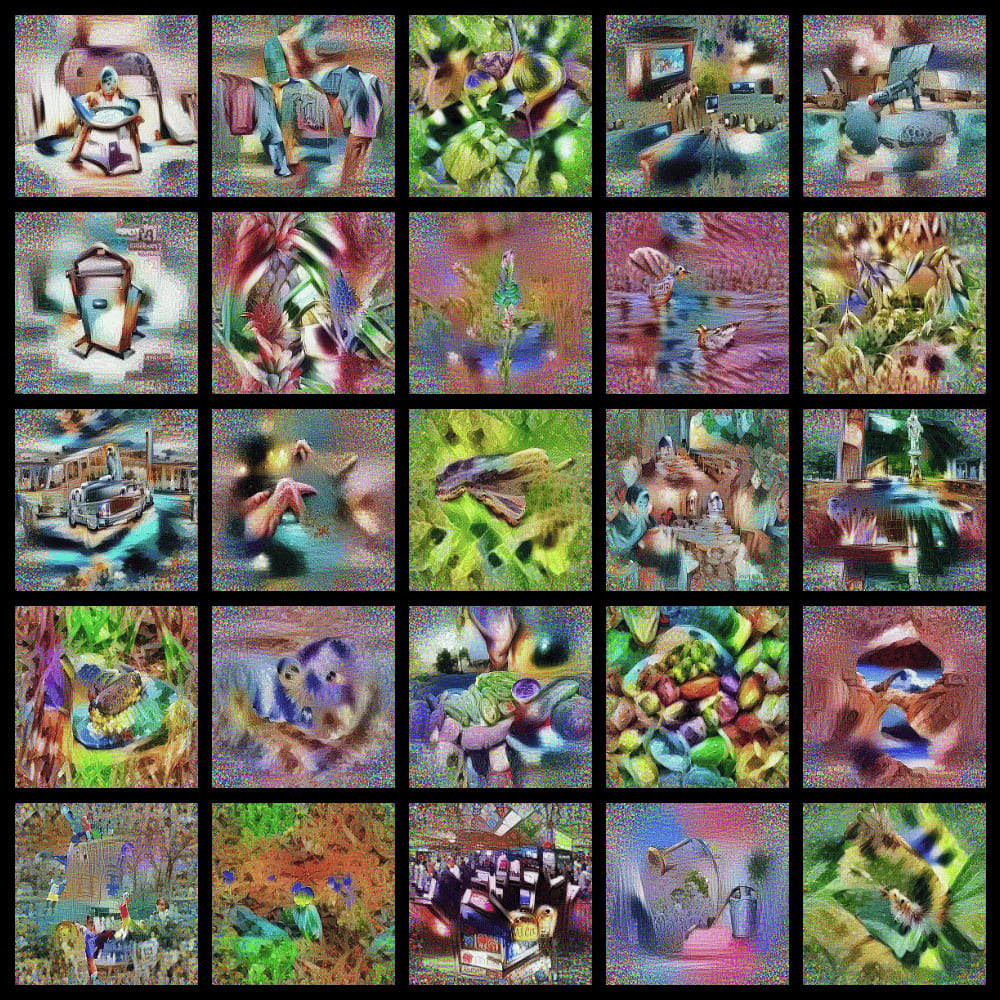}
        \caption{IPC10}
        \label{fig:in21k_ipc10}
    \end{minipage}%
    \hfill
    \begin{minipage}[t]{0.48\columnwidth}
        \centering
        \includegraphics[width=\textwidth]{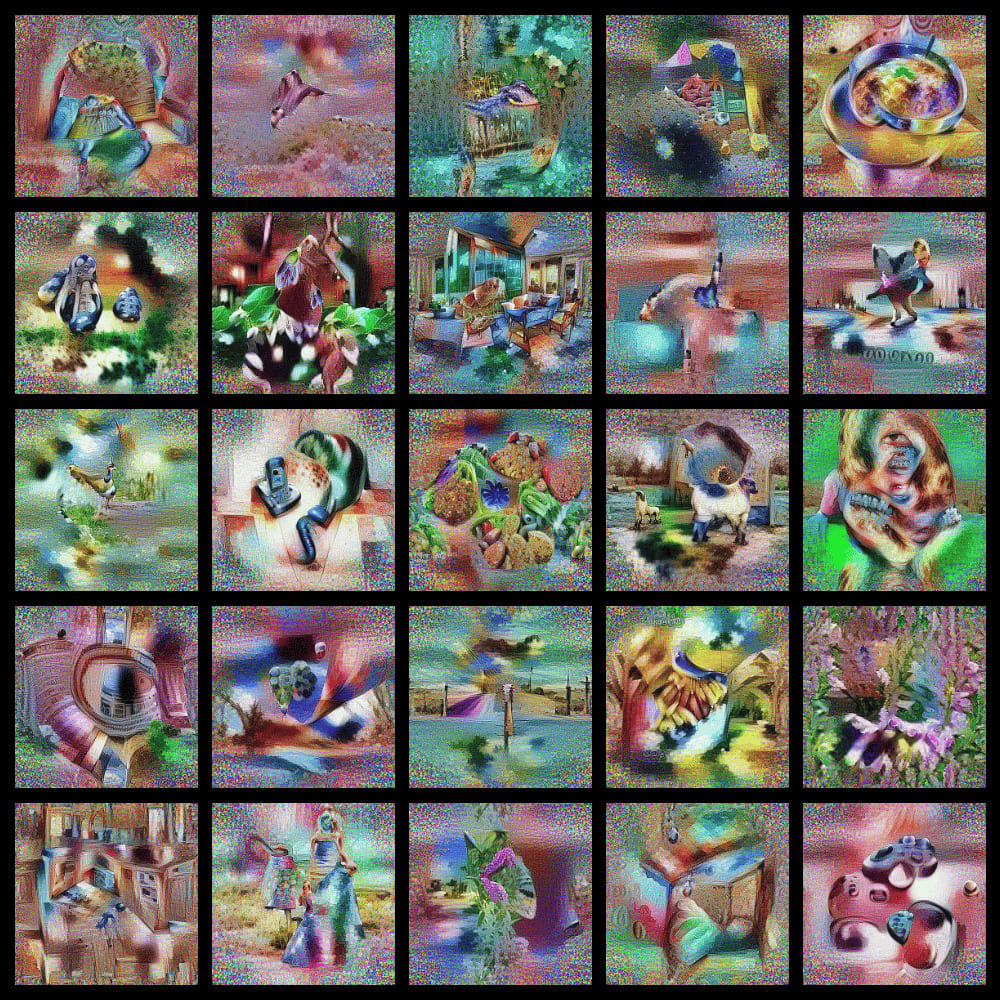}
        \caption{IPC20}
        \label{fig:in21k_ipc20}
    \end{minipage}
    \caption{Visualization of ImageNet-21K-P. Images are randomly sampled.}
    \label{fig:imagenet21k}
\end{figure}

\end{document}